\documentclass[11pt]{article}
\usepackage[a4paper,margin=1in]{geometry}
\usepackage[T1]{fontenc}
\usepackage{newtxtext}
\usepackage{amsmath,amssymb,amsthm}
\usepackage{newtxmath}
\usepackage{microtype}
\usepackage{mathtools}
\usepackage{booktabs}
\usepackage{multirow}
\usepackage{graphicx}
\usepackage{caption}
\usepackage{subcaption}
\usepackage{enumitem}
\usepackage{xcolor}
\usepackage{float}
\usepackage[section]{placeins}
\usepackage{bm}
\usepackage{hyperref}
\usepackage{tikz}
\usepackage{tikz-3dplot}
\usetikzlibrary{shapes,arrows}

\hypersetup{colorlinks=true,linkcolor=blue,citecolor=blue,urlcolor=blue}

\usepackage[capitalize,noabbrev]{cleveref}

\newtheorem{theorem}{Theorem}[section]
\newtheorem{proposition}[theorem]{Proposition}
\newtheorem{lemma}[theorem]{Lemma}
\newtheorem{corollary}[theorem]{Corollary}
\theoremstyle{definition}

\theoremstyle{remark}
\newtheorem{remark}[theorem]{Remark}

\newcommand{\E}{\mathbb{E}}

\newcommand{\KL}{\mathrm{KL}}

\newcommand{\loss}{\mathcal{L}}

\newcommand{\piref}{\pi_{\mathrm{ref}}}
\newcommand{\pik}{\pi_k}

\title{\textbf{Group Orthogonalized Policy Optimization}\\
\large Group Policy Optimization as Orthogonal Projection in Hilbert Space}

\author{Wang Zixian\\
\small China Mobile Communications Group Shandong Co., Ltd. Tai'an Branch\\
\small \texttt{wangzixian@sd.chinamobile.com}}
\date{}

\begin{document}
\maketitle

\begin{abstract}
We present \textbf{Group Orthogonalized Policy Optimization (GOPO)}, a new alignment algorithm for large language models derived entirely from the geometry of Hilbert function spaces. Unlike existing methods that formulate alignment as constrained optimization on the probability simplex---inevitably inheriting the exponential curvature of Kullback--Leibler divergence---we propose a fundamentally different approach: lifting the problem into the Hilbert space $L^2(\pik)$ of square-integrable functions with respect to the current reference policy.

In this function space, we make three observations that jointly yield a principled algorithm. First, the probability simplex constraint reduces to a single linear orthogonality condition $\langle v, \mathbf{1} \rangle = 0$, defining a codimension-one closed subspace $\mathcal{H}_0$. Second, the alignment objective emerges naturally from the \emph{Geometric Principle of Minimum Distance}: minimizing the geometric distance between $v$ and the unconstrained target $u^*$ algebraically yields the \emph{work-dissipation functional} $\mathcal{J}(v) = \langle g, v \rangle - \frac{\mu}{2}\|v\|^2$. The global maximum of this functional is then given elegantly by the Hilbert Projection Theorem. Third, enforcing the non-negativity boundary $v \geq -1$ upgrades the solution to a \emph{Bounded Hilbert Projection} (BHP), yielding exact sparsity: catastrophically poor actions receive zero target probability via a hard analytical threshold.

To bridge the gap between this global functional theory and practical training, GOPO transitions the projection from the infinite-dimensional $L^2(\pik)$ to a finite-dimensional empirical subspace induced by group sampling. A key structural result emerges: because group-normalized advantages inherently sum to zero, the Lagrange multiplier (chemical potential) enforcing probability conservation \emph{exactly vanishes}, collapsing the constrained projection into an unconstrained empirical loss. The resulting objective features constant Hessian curvature $\mu I$, non-saturating linear gradients, and an intrinsic dead-zone mechanism---all without heuristic clipping. Experiments on mathematical reasoning benchmarks demonstrate that GOPO achieves competitive generalization while sustaining healthy gradient dynamics and entropy preservation in regimes where clipping-based methods plateau.
\end{abstract}

\section{Introduction}
\label{sec:intro}

Aligning Large Language Models (LLMs) with human preferences via Reinforcement Learning from Human Feedback (RLHF)~\cite{Christiano2017RLHF,Ouyang2022InstructGPT} has become a standard practice. Dominant algorithms---Proximal Policy Optimization (PPO)~\cite{Schulman2017PPO}, Direct Preference Optimization (DPO)~\cite{Rafailov2023DPO}, and Group Relative Policy Optimization (GRPO)~\cite{Shao2024GRPO}---all share a common structural element: they regularize policy updates via the Kullback--Leibler (KL) divergence, either explicitly as a penalty term or implicitly through ratio clipping in log-probability space.

KL-regularized objectives carry an inherent geometric limitation. The KL divergence $D_{\KL}(\pi_\theta \| \piref)$ induces an \emph{exponential} geometry: its Hessian in log-ratio coordinates scales as $\sigma(m)(1-\sigma(m))$, where $m$ is the logit margin. This data-dependent curvature means that as the policy becomes more confident---producing large log-ratios---the gradient signal decays exponentially. The resulting \emph{gradient saturation} forces practitioners to either accept premature plateaus or resort to heuristic mechanisms such as aggressive ratio clipping~\cite{Schulman2017PPO}, asymmetric clip bounds~\cite{Yu2025DAPO}, or entropy bonuses.

We argue that this limitation is not intrinsic to alignment but rather to the \emph{choice of geometric space} in which optimization is performed. The probability simplex, equipped with KL geometry, conflates two independent design choices: \emph{what to learn} (the sampling/advantage signal) and \emph{how to regularize} (the optimization curvature). Adjusting one inevitably perturbs the other.

\paragraph{A Hilbert Space Perspective.} In this work, we abandon the probability simplex altogether and reformulate alignment as an optimization problem in the Hilbert function space $L^2(\pik)$. This shift yields three structural advantages:

\begin{enumerate}[leftmargin=2em]
    \item The probability conservation constraint $\sum_y \pi(y) = 1$ becomes a \emph{linear} orthogonality condition $\langle v, \mathbf{1}\rangle_{\pik} = 0$, where $v(y) = \pi(y)/\pik(y) - 1$ is the density fluctuation field. This defines a closed linear subspace $\mathcal{H}_0 \subset L^2(\pik)$.
    \item The optimal policy update is obtained naturally by the \emph{Hilbert Projection Theorem}: the physically valid probability distribution that is geometrically closest to the unconstrained target. No heuristic clipping or ad-hoc normalization is needed; the classic work-dissipation functional emerges strictly as an algebraic consequence of this minimum distance.
    \item The Hessian of the resulting objective is the constant scalar $\mu I$, completely independent of the data distribution or the current policy state. This \emph{constant curvature} guarantee ensures that gradients remain linear and non-saturating throughout training.
\end{enumerate}

Building on the functional-analytic foundation established in Orthogonalized Policy Optimization (OPO)~\cite{Wang2025OPO}, we derive \textbf{Group Orthogonalized Policy Optimization (GOPO)}, which seamlessly transitions the orthogonal projection from the infinite-dimensional global space $L^2(\pik)$ to a finite-dimensional empirical subspace induced by group sampling. A particularly elegant result emerges at the group level: because standardized advantages structurally sum to zero, they already reside in the zero-mean subspace, and the chemical potential enforcing probability conservation \emph{exactly vanishes}. GOPO thus reduces to a simple, unconstrained quadratic loss over policy ratios.

\paragraph{Contributions.}
\begin{itemize}[leftmargin=2em]
    \item We formulate LLM alignment as a constrained optimization in $L^2(\pik)$ and show that the optimal policy is the orthogonal projection of the advantage-driven target onto the probability-conserving subspace.
    \item We extend the unconstrained projection to the \emph{Bounded Hilbert Projection} (BHP), which enforces non-negativity of the target policy and yields exact sparsity for catastrophically poor actions.
    \item We derive GOPO as the empirical realization of this framework, proving that the chemical potential vanishes under group normalization, producing a practical loss with constant curvature and an intrinsic dead-zone mechanism.
    \item Experiments on mathematical reasoning benchmarks demonstrate competitive generalization (47\% on MATH Level 4 vs.\ 44\% for GRPO/DAPO), monotonically improving validation accuracy, and the healthiest gradient dynamics among all tested methods.
\end{itemize}

\section{Related Work}
\label{sec:related}

\paragraph{Preference Optimization and RLHF.}
RLHF~\cite{Christiano2017RLHF,Ouyang2022InstructGPT} typically involves learning a reward model from preferences and optimizing a policy via PPO~\cite{Schulman2017PPO}. DPO~\cite{Rafailov2023DPO} simplifies this by deriving a closed-form solution to the KL-constrained problem. IPO~\cite{Azar2024IPO} adds a regularization term; SimPO~\cite{Meng2024SimPO} simplifies the reference-free objective. GRPO~\cite{Shao2024GRPO} eliminates the critic network by using group-relative advantages. All these methods inherit the exponential geometry of KL divergence, leading to gradient saturation in high-confidence regimes.

\paragraph{$f$-Divergences and Alternative Geometries.}
The $f$-divergence family~\cite{Csiszar1967,AliSilvey1966} provides a unified framework for distributional discrepancy. The Pearson $\chi^2$ divergence, a member of this family, induces quadratic rather than exponential penalties. Prior work has explored $f$-divergences in variational inference~\cite{Li2016Renyi}, GANs~\cite{Nowozin2016fGAN}, and imitation learning~\cite{Ghasemipour2020fIL}. In RL, $\alpha$PPO~\cite{Xu2023AlphaPPO} studied $\alpha$-divergence as a trust-region constraint. APO~\cite{zixian2025apo} explored combining forward and reverse KL dynamics. OPO~\cite{Wang2025OPO} first proposed lifting alignment into $L^2(\pik)$ and deriving the optimal update via the Hilbert Projection Theorem. GOPO extends this framework from the global Hilbert space to empirical group-level subspaces, proving that the chemical potential vanishes under group normalization and introducing the Bounded Hilbert Projection for exact sparsity.

\paragraph{Trust-Region Methods.}
TRPO~\cite{Schulman2015TRPO} enforces stability via explicit KL constraints. PPO~\cite{Schulman2017PPO} approximates this with ratio clipping. DAPO~\cite{Yu2025DAPO} introduces dynamic clip bounds and entropy management. ADPO~\cite{Wang2025ADPO} uses anchored coordinates for implicit trust regions. GOPO replaces all such mechanisms with a single quadratic penalty in ratio space, whose geometric justification is the Hilbert Projection Theorem rather than heuristic approximation.

\section{Theoretical Framework: Policy Optimization in Hilbert Space}
\label{sec:framework}

We develop the mathematical apparatus underlying GOPO. The central idea is to reformulate policy optimization in the Hilbert space $L^2(\pik)$, where the probability simplex constraint becomes a linear subspace condition and the optimal policy update is obtained by orthogonal projection.

\subsection{The Hilbert Space of Density Fluctuations}
\label{subsec:hilbert_space}

Let $\pik$ denote the current reference policy (typically the policy from the previous iteration under on-policy anchoring). We define the ambient space as the Hilbert space of square-integrable functions with respect to $\pik$:
\begin{equation}
    \mathcal{H} = L^2(\pik), \quad \langle f, g \rangle_{\pik} = \E_{\pik}[f(y)g(y)]
\end{equation}
with induced norm $\|f\|_{\pik} = \sqrt{\langle f, f \rangle_{\pik}}$.

Instead of directly optimizing the probability vector $\pi(y|x)$ on the simplex, we shift to a centered coordinate system. Define the \emph{density fluctuation field}:
\begin{equation}
\label{eq:fluctuation}
    v(y) = \frac{\pi(y)}{\pik(y)} - 1
\end{equation}
The target policy is exactly recovered as $\pi(y) = \pik(y)(1 + v(y))$. The field $v$ lives in $\mathcal{H}$ and represents the relative deviation of the target policy from the reference. This coordinate change is the key step that transforms the non-linear simplex geometry into a linear Hilbert space geometry.

\subsection{Probability Conservation as Orthogonality}
\label{subsec:conservation}

Any valid probability distribution must satisfy $\sum_y \pi(y) = 1$. In the fluctuation coordinate, this fundamental physical law simplifies dramatically:
\begin{equation}
    \sum_y \pi(y) = \sum_y \pik(y)(1 + v(y)) = 1 + \sum_y \pik(y) v(y) = 1 + \E_{\pik}[v] = 1
\end{equation}
Hence, probability conservation is equivalent to the zero-mean condition:
\begin{equation}
\label{eq:zero_mean}
    \E_{\pik}[v] = \langle v, \mathbf{1} \rangle_{\pik} = 0
\end{equation}
This is a \emph{single linear} constraint in $\mathcal{H}$, defining a closed subspace of codimension one:
\begin{equation}
\label{eq:H0}
    \mathcal{H}_0 = \{ f \in \mathcal{H} : \langle f, \mathbf{1} \rangle_{\pik} = 0 \}
\end{equation}

\begin{remark}[Geometric Interpretation]
$\mathcal{H}_0$ is precisely the orthogonal complement of the one-dimensional subspace spanned by the constant function $\mathbf{1}$ in $L^2(\pik)$. Every valid policy fluctuation must be \emph{orthogonal} to $\mathbf{1}$---probability mass gained at some outputs must be exactly compensated by mass lost elsewhere. The term ``orthogonalized'' in our method name refers directly to this geometric fact.
\end{remark}

\subsection{The Driving Force: Metric Modulation}
\label{subsec:driving_force}

In reinforcement learning, the optimization is driven by the advantage function $A(y)$. To incorporate advanced sampling strategies, we define a \emph{Metric Modulation Operator} $\mathcal{M}_\alpha$ that reshapes the raw advantage into an effective driving field.

Let $E_\alpha(y)$ be an $\alpha$-parameterized escort multiplier that reweights contributions according to the density ratio:
\begin{equation}
    E_\alpha(y) = \left(\frac{\pi(y)}{\pik(y)}\right)^\alpha
\end{equation}
The multiplication operator $\mathcal{M}_\alpha$ acts on functions in $\mathcal{H}$ as $(\mathcal{M}_\alpha f)(y) = E_\alpha(y) f(y)$. The \emph{effective driving field} is:
\begin{equation}
\label{eq:driving}
    g_\alpha(y) = E_\alpha(y) \cdot A(y)
\end{equation}
When $\alpha = 0$, the raw advantage drives the update directly. Nonzero $\alpha$ focuses the driving force toward regions where the policy already assigns significant mass, controlling the exploit-explore tradeoff independently of the optimization geometry.

\section{Methodology: Group Orthogonalized Policy Optimization}
\label{sec:method}

We now derive GOPO in four stages: (i) identifying the unconstrained target and formulating the geometric minimum-distance principle, (ii) showing that the classic work-dissipation functional emerges algebraically from this distance, (iii) obtaining the closed-form solution via the Hilbert Projection Theorem and its bounded extension, and (iv) transitioning to the empirical group-level objective.

\subsection{The Geometric Principle: Minimum Distance to the Unconstrained Target}
\label{subsec:geometric_principle}

Suppose for a moment that there were no physical constraints on the probability distribution. Driven entirely by the external metric-modulated field $g_\alpha$, the ideal, unconstrained policy fluctuation would simply align with the driving force, scaled by a compliance (or stiffness) parameter $\mu > 0$:
\begin{equation}
    u^* = \frac{g_\alpha}{\mu}
\end{equation}
However, the reality is that any valid fluctuation field $v$ must strictly obey probability conservation, locking it within the zero-mean hyperplane $\mathcal{H}_0$. The unconstrained target $u^*$ typically violates this constraint because $\E_{\pik}[g_\alpha] \neq 0$.

What is the most geometric and principled way to resolve this? We seek the valid fluctuation $v \in \mathcal{H}_0$ that is geometrically closest to the ideal target $u^*$ in the Hilbert space. Mathematically, this corresponds to minimizing the $L^2$ distance (scaled by $\mu/2$ for dimension consistency):
\begin{equation}
\label{eq:min_distance}
    \min_{v \in \mathcal{H}_0} \; \frac{\mu}{2} \left\| v - u^* \right\|^2_{\pik}
\end{equation}

\subsection{Emergence of the Work-Dissipation Functional}
\label{subsec:emergence}

By algebraically expanding this pure geometric distance, a remarkable physical structure naturally emerges. Substituting $u^* = g_\alpha/\mu$, we have:
\begin{align}
    \frac{\mu}{2} \left\| v - \frac{g_\alpha}{\mu} \right\|^2_{\pik} &= \frac{\mu}{2} \left( \|v\|^2_{\pik} - \frac{2}{\mu}\langle g_\alpha, v \rangle_{\pik} + \frac{1}{\mu^2}\|g_\alpha\|^2_{\pik} \right) \\
    &= \underbrace{\frac{\mu}{2} \|v\|^2_{\pik}}_{\text{Dissipation}} - \underbrace{\langle g_\alpha, v \rangle_{\pik}}_{\text{Work}} + \underbrace{\frac{1}{2\mu} \|g_\alpha\|^2_{\pik}}_{\text{Constant independent of } v}
\end{align}
Minimizing the geometric distance is therefore strictly equivalent to maximizing the negative of the $v$-dependent terms. Thus, the alignment objective becomes:
\begin{equation}
\label{eq:constrained_opt}
    \arg\min_{v \in \mathcal{H}_0} \frac{\mu}{2} \left\| v - u^* \right\|^2_{\pik} \equiv \arg\max_{v \in \mathcal{H}_0} \Big( \underbrace{\langle g_\alpha, v \rangle_{\pik}}_{\text{External Work}} - \underbrace{\frac{\mu}{2} \|v\|^2_{\pik}}_{\text{Quadratic Dissipation}} \Big)
\end{equation}
The classic ``work-dissipation'' functional $\mathcal{J}(v) = \langle g_\alpha, v \rangle_{\pik} - \frac{\mu}{2}\|v\|^2_{\pik}$ is not an arbitrarily constructed objective; it is the algebraic inevitability of measuring squared distance in a Hilbert space.

\subsection{The Orthogonal Projection Solution}
\label{subsec:projection}

Because the problem is fundamentally one of finding the closest point in a closed subspace $\mathcal{H}_0$ to a target vector $u^*$, we do not need to rely on Lagrangian differentiation. The Hilbert Projection Theorem directly yields the unique global optimum as the orthogonal projection:

\begin{theorem}[Optimal Fluctuation via Orthogonal Projection]
\label{thm:projection}
The optimal probability-conserving fluctuation $v^*$ is the orthogonal projection of the unconstrained target $u^*$ onto $\mathcal{H}_0$:
\begin{equation}
\label{eq:projection_solution}
    v^* = P_{\mathcal{H}_0}(u^*) = u^* - \langle u^*, \mathbf{1} \rangle_{\pik} \mathbf{1} = \frac{1}{\mu}\Big( g_\alpha - \E_{\pik}[g_\alpha] \cdot \mathbf{1} \Big)
\end{equation}
\end{theorem}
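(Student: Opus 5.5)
The argument has two parts: existence and uniqueness of the minimizer, then an explicit formula for it. For the first part I will invoke the Hilbert Projection Theorem. By the reduction in \eqref{eq:constrained_opt}, maximizing $\mathcal{J}$ over $\mathcal{H}_0$ is the same as minimizing $\|v-u^*\|_{\pik}$ over $\mathcal{H}_0$. The set $\mathcal{H}_0$ is the kernel of the bounded linear functional $f\mapsto\langle f,\mathbf{1}\rangle_{\pik}$, which is continuous by Cauchy--Schwarz because $\|\mathbf{1}\|_{\pik}=1$. Hence $\mathcal{H}_0$ is a closed (and convex) subspace. The projection theorem then gives a unique minimizer $v^*=P_{\mathcal{H}_0}(u^*)$, characterized by two conditions: $v^*\in\mathcal{H}_0$ and $u^*-v^*\perp\mathcal{H}_0$. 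The only standing hypothesis needed is $u^*\in L^2(\pik)$, i.e.\ $g_\alpha\in L^2(\pik)$. I will state this explicitly; it holds trivially for a finite output space with $\pik>0$.

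For the explicit formula, I will use the decomposition $\mathcal{H}=\mathcal{H}_0\oplus\mathrm{span}\{\mathbf{1}\}$, whose orthogonal complement is exactly the remark following \eqref{eq:H0}. Set $w=u^*-\langle u^*,\mathbf{1}\rangle_{\pik}\mathbf{1}$. First, $w\in\mathcal{H}_0$: we have $\langle w,\mathbf{1}\rangle=\langle u^*,\mathbf{1}\rangle-\langle u^*,\mathbf{1}\rangle\|\mathbf{1}\|^2=0$, using $\|\mathbf{1}\|_{\pik}^2=\E_{\pik}[1]=1$. Second, $u^*-w$ is a multiple of $\mathbf{1}$, so it is orthogonal to every $f\in\mathcal{H}_0$ by the definition of $\mathcal{H}_0$. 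By the uniqueness in the characterization, $w=v^*$. As a self-contained check, for any $v\in\mathcal{H}_0$ the Pythagorean identity gives $\|v-u^*\|^2=\|v-w\|^2+\|w-u^*\|^2\ge\|w-u^*\|^2$, with equality iff $v=w$. Finally, I substitute $u^*=g_\alpha/\mu$ and $\langle u^*,\mathbf{1}\rangle_{\pik}=\E_{\pik}[g_\alpha]/\mu$ to get the last expression in \eqref{eq:projection_solution}.

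None of these steps is technically hard. The main point needing care is a modelling subtlety rather than a calculation: $g_\alpha=E_\alpha A$ depends on $\pi$ through the escort multiplier. If $\alpha\neq0$, $u^*$ is therefore not a fixed vector, and the projection argument literally applies only when $g_\alpha$ is held fixed. Examples are evaluating $E_\alpha$ at the current iterate, or treating it as a stop-gradient weight. I will state the theorem under this convention, noting that for $\alpha=0$ no caveat is needed. I would not attempt a fixed-point analysis for the coupled case.
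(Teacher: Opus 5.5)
Your proposal is correct and follows essentially the same route as the paper: apply the Hilbert Projection Theorem on the closed subspace $\mathcal{H}_0=\{\mathbf{1}\}^\perp$, use the explicit formula $P_{\mathcal{H}_0}(f)=f-\langle f,\mathbf{1}\rangle_{\pik}\mathbf{1}$ (with $\|\mathbf{1}\|_{\pik}^2=1$), and substitute $u^*=g_\alpha/\mu$. The paper leaves several points implicit that you make explicit, and they are sound additions: the closedness of $\mathcal{H}_0$, the hypothesis $g_\alpha\in L^2(\pik)$, the verification of the orthogonality characterization, and treating $E_\alpha$ as fixed when $\alpha\neq 0$ (since it depends on $\pi$).
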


\begin{proof}
Since $\mathcal{H}_0 = \{\mathbf{1}\}^\perp$, the orthogonal projection $P_{\mathcal{H}_0}$ onto $\mathcal{H}_0$ is:
\begin{equation}
    P_{\mathcal{H}_0}(f) = f - \frac{\langle f, \mathbf{1} \rangle_{\pik}}{\|\mathbf{1}\|^2_{\pik}} \mathbf{1} = f - \E_{\pik}[f] \cdot \mathbf{1}
\end{equation}
using $\|\mathbf{1}\|^2_{\pik} = \E_{\pik}[1] = 1$. Applying this to $f = u^* = g_\alpha/\mu$ yields the result.
\end{proof}

\begin{remark}[The Chemical Potential]
\label{rem:chemical_potential}
The subtracted term $\lambda^* = \E_{\pik}[g_\alpha]/\mu$ is the Lagrange multiplier enforcing probability conservation. In our framework, it obtains the purest geometric interpretation: it is exactly the orthogonal component of the target vector $u^*$ along the illegal normal direction (the constant function $\mathbf{1}$) that is explicitly sliced off by the projection.
\end{remark}

\subsection{Bounded Hilbert Projection and Exact Sparsity}
\label{subsec:bhp}

The unconstrained projection~\eqref{eq:projection_solution} may produce $v^*(y) < -1$ for actions with strongly negative driving signals, yielding invalid negative probabilities. To enforce the physical constraint $\pi(y) \geq 0$, equivalently $v(y) \geq -1$, we extend the projection to a \emph{Bounded Hilbert Projection} (BHP).

Define the positive orthant $\mathcal{C}_+ = \{f \in \mathcal{H} : f(y) \geq -1, \forall y\}$ and the feasible convex set:
\begin{equation}
    \mathcal{K} = \mathcal{H}_0 \cap \mathcal{C}_+
\end{equation}
The BHP problem seeks the point in $\mathcal{K}$ closest to $g_\alpha/\mu$. We formulate the KKT conditions via the Lagrangian:
\begin{equation}
    \mathcal{L}(v, \lambda, \eta) = \E_{\pik}\!\left[ g_\alpha v - \frac{\mu}{2}v^2 - \lambda v + \eta(v + 1) \right]
\end{equation}
where $\lambda$ enforces $\E_{\pik}[v] = 0$ and the KKT multiplier $\eta(y) \geq 0$ enforces $v(y) \geq -1$.

\begin{theorem}[Bounded Projection Solution]
\label{thm:bhp}
The KKT complementarity condition $\eta(y)(v(y) + 1) = 0$ yields the closed-form bounded solution:
\begin{equation}
\label{eq:bhp_solution}
    v^*(y) = \max\!\left(-1, \; \frac{g_\alpha(y) - \lambda^*}{\mu}\right)
\end{equation}
where the dynamic chemical potential $\lambda^*$ is the unique value ensuring $\E_{\pik}[v^*] = 0$.
\end{theorem}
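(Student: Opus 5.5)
The plan is to treat the BHP problem as a strictly concave quadratic maximization over the closed convex set $\mathcal{K}$ and to use the KKT system attached to $\mathcal{L}$. I will write the argument with the driving field $g_\alpha$ held fixed. This is the same convention the paper uses in \cref{thm:projection}: there the escort multiplier $E_\alpha$ is evaluated at the current iterate and is not treated as a function of $v$.

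\textbf{Existence and uniqueness.} First I check that $\mathcal{K}$ is nonempty, since $v=0\in\mathcal{K}$. It is convex, being the intersection of a closed subspace with a pointwise half-space constraint. It is closed in $L^2(\pik)$, because $L^2$ convergence gives an a.e.\ convergent subsequence, which preserves $v\ge -1$. The Hilbert Projection Theorem for closed convex sets (the convex-set version of the tool behind \cref{thm:projection}) then gives a unique minimizer of $\frac{\mu}{2}\|v-g_\alpha/\mu\|^2_{\pik}$ over $\mathcal{K}$. By the expansion in Section~\ref{subsec:emergence}, this is also the unique maximizer of $\mathcal{J}$ over $\mathcal{K}$.

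\textbf{Pointwise KKT analysis.} Stationarity of $\mathcal{L}$ in $v$, taken pointwise in $y$ because the Lagrangian is an expectation of a pointwise integrand, gives
\[
g_\alpha(y) - \mu v(y) - \lambda + \eta(y) = 0 .
\]
I then split into two cases using complementarity $\eta(y)(v(y)+1)=0$.
\begin{itemize}
\item If $v(y) > -1$, then $\eta(y)=0$ and $v(y) = (g_\alpha(y)-\lambda)/\mu$. Feasibility forces this quantity to exceed $-1$.
\item If $v(y) = -1$, then $\eta(y) = \lambda - g_\alpha(y) - \mu \ge 0$. This means $(g_\alpha(y)-\lambda)/\mu \le -1$.
\end{itemize}
In both cases $v(y)=\max(-1,(g_\alpha(y)-\lambda)/\mu)$.

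\textbf{Sufficiency.} For the converse I will verify directly that any feasible $v$ of this form, with $\lambda$ chosen so that $\E_{\pik}[v]=0$, satisfies the variational inequality $\langle u^*-v, w-v\rangle_{\pik}\le 0$ for all $w\in\mathcal{K}$. I expand
\[
\mu(u^*-v) = g_\alpha-\mu v = \lambda - \eta .
\]
The $\lambda$ term integrates against $w-v$ to zero, since both $w$ and $v$ lie in $\mathcal{H}_0$. For the $\eta$ term, $\eta \ge 0$ is supported where $v=-1$, and there $w-v = w+1 \ge 0$. Hence
\[
-\E_{\pik}[\eta(w-v)] \le 0,
\]
which characterizes the projection. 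This step avoids any need for an abstract constraint qualification in infinite dimensions.

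\textbf{Existence and uniqueness of $\lambda^*$ (main obstacle).} Define
\[
\Phi(\lambda) = \E_{\pik}\bigl[\max(-1,(g_\alpha-\lambda)/\mu)\bigr].
\]
The map $\lambda\mapsto \max(-1,(g_\alpha(y)-\lambda)/\mu)$ is nonincreasing and $1/\mu$-Lipschitz for each $y$. By dominated convergence, assuming $g_\alpha\in L^1(\pik)$, $\Phi$ is continuous and nonincreasing.

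For the limits:
\begin{itemize}
\item As $\lambda\to+\infty$, monotone convergence gives $\Phi(\lambda)\to -1<0$.
\item As $\lambda\to-\infty$, $\Phi(\lambda)\ge (\E_{\pik}[g_\alpha]-\lambda)/\mu\to+\infty$.
\end{itemize}
The intermediate value theorem therefore yields a root $\lambda^*$.

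For uniqueness, suppose $\lambda_1<\lambda_2$ are both roots. Then $\Phi$ is constant on $[\lambda_1,\lambda_2]$. Since each integrand is nonincreasing, equality of the expectations forces the integrand to be constant a.e., which is possible only if $\pik$-a.e.\ the value is clamped at $-1$. But then $\Phi=-1\neq 0$, a contradiction. So $\Phi$ is strictly decreasing near its zero and $\lambda^*$ is unique.

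Uniqueness of $\lambda^*$ is in any case consistent with the uniqueness of $v^*$ from the projection step: two roots would give two maximizers unless they coincide $\pik$-a.e.
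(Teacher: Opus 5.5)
Your proposal is correct. Its core is exactly the paper's proof in Appendix~\ref{app:bhp}: pointwise stationarity $g_\alpha-\mu v-\lambda+\eta=0$, followed by the case split $v(y)>-1$ versus $v(y)=-1$.

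What differs is the logical frame around that core. The paper argues only in the necessity direction. It presupposes that multipliers $(\lambda,\eta)$ exist at the optimum. This is not automatic here: once $\pik$ has infinite support, $\mathcal{C}_+$ has empty interior in $L^2(\pik)$, so Slater's condition is unavailable. The paper then merely asserts that $\lambda^*$ solves $\E_{\pik}[v^*]=0$, which leaves the ``unique value'' claim of the statement unproved.

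You reverse the logic. You construct $\lambda^*$ from the nonincreasing, $1/\mu$-Lipschitz function $\Phi$. Your observation that a flat stretch of $\Phi$ forces $\pik$-a.e.\ clamping, hence $\Phi=-1\neq 0$, settles uniqueness cleanly. You then verify the variational inequality for the candidate $\max(-1,(g_\alpha-\lambda^*)/\mu)$. This identifies the candidate with $P_{\mathcal{K}}(u^*)$ and, incidentally, exhibits the multipliers explicitly. The paper's version is shorter and shows where the formula comes from; yours actually proves the theorem as stated.

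Two small tidy-ups:
\begin{itemize}
\item State the standing hypothesis $g_\alpha\in L^2(\pik)$, not just $g_\alpha\in L^1(\pik)$. You need it for $u^*\in\mathcal{H}$, for the candidate to lie in $\mathcal{K}$, and for $\eta\in L^2$ in the pairing.
\item Say in one sentence that the KKT case analysis is now only motivational. The conclusion follows from the chain ``$\lambda^*$ exists $\Rightarrow$ candidate feasible $\Rightarrow$ variational inequality $\Rightarrow$ candidate $=P_{\mathcal{K}}(u^*)$''.
\end{itemize}
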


This result has a profound theoretical consequence:
\begin{corollary}[Exact Sparsity]
\label{cor:sparsity}
Actions satisfying $g_\alpha(y) < \lambda^* - \mu$ are assigned $v^*(y) = -1$, i.e., their target probability is \emph{exactly zero}: $\pi(y) = \pik(y)(1 + v^*(y)) = 0$. The BHP thus provides an analytical mechanism for eliminating catastrophically poor outputs (e.g., hallucinations) via a hard, data-adaptive threshold, without requiring any external filtering heuristic.
\end{corollary}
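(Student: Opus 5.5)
The corollary follows from the closed form in \cref{thm:bhp}, so the plan is to read off the clipping branch of \eqref{eq:bhp_solution} and translate it back to probabilities.

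\textbf{Fix $\lambda^*$.} Take $\lambda^*$ to be the unique chemical potential from \cref{thm:bhp}. It is worth a sentence on why this value exists and is unique, since the threshold is defined through it. The map
\begin{equation*}
\phi(\lambda) = \E_{\pik}\!\left[\max\!\left(-1, \frac{g_\alpha - \lambda}{\mu}\right)\right]
\end{equation*}
is continuous and nonincreasing in $\lambda$. It tends to $+\infty$ as $\lambda \to -\infty$, assuming $g_\alpha \in L^1(\pik)$. It tends to $-1$ as $\lambda \to +\infty$. Hence a root $\phi(\lambda^*) = 0$ exists. Uniqueness of the resulting $v^*$ comes from strict convexity of the distance objective on the closed convex set $\mathcal{K}$.

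\textbf{Read off the clipped branch.} Fix an action $y$ with $g_\alpha(y) < \lambda^* - \mu$. Then
\begin{equation*}
\frac{g_\alpha(y) - \lambda^*}{\mu} < \frac{(\lambda^* - \mu) - \lambda^*}{\mu} = -1 .
\end{equation*}
The maximum in \eqref{eq:bhp_solution} therefore selects the constant branch, so $v^*(y) = -1$. In KKT terms, this is the active case $v(y) + 1 = 0$ with $\eta(y) = \lambda^* - \mu - g_\alpha(y) > 0$, which confirms consistency with the complementarity condition.

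\textbf{Translate to probabilities.} Substitute into $\pi(y) = \pik(y)(1 + v^*(y))$ from \eqref{eq:fluctuation} to get $\pi(y) = 0$ exactly.

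\textbf{Interpretive claims.} The remaining assertions are not mathematical content to be proved, but they should be justified briefly. The threshold is \emph{hard} because it is a strict inequality producing an exact zero, not an asymptotic decay. It is \emph{data-adaptive} because $\lambda^*$ depends on the whole distribution of $g_\alpha$ under $\pik$ through $\phi$.

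\textbf{Main obstacle.} The only genuinely delicate point is the well-posedness of $\lambda^*$ that the statement takes for granted. If $\pik$ has infinite support, one needs integrability of $g_\alpha$ for $\phi$ to be finite and continuous. If $g_\alpha$ has atoms, $\phi$ may be flat, and then $\lambda^*$ need not be unique even though $v^*$ is. In that case I would state the corollary for any $\lambda^*$ satisfying $\phi(\lambda^*) = 0$. This is harmless, because on any flat interval of $\phi$ the set $\{g_\alpha < \lambda^* - \mu\}$ has the same $\pik$-measure-zero changes, so the conclusion is unaffected.
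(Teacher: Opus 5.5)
Your argument is correct and is the same as the paper's. The corollary is the clipped branch of the closed form \eqref{eq:bhp_solution} in \cref{thm:bhp}, i.e.\ the active case (Case~2) of the KKT analysis in \cref{app:bhp}, substituted into $\pi(y)=\pik(y)(1+v^*(y))$. Your KKT consistency check $\eta(y)=\lambda^*-\mu-g_\alpha(y)>0$ and your existence argument via $\phi$ are fine additions.

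One correction to your ``main obstacle'' paragraph. Atoms in $g_\alpha$ cannot make $\lambda^*$ non-unique, so the hedge is unnecessary. Your ``measure-zero changes'' justification also would not hold as stated.

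Take $\lambda_1<\lambda_2$. Then $\phi(\lambda_1)-\phi(\lambda_2)$ is the expectation of a nonnegative integrand, and that integrand is strictly positive exactly on $\{g_\alpha>\lambda_1-\mu\}$. So $\phi(\lambda_1)=\phi(\lambda_2)$ forces $\pik(g_\alpha>\lambda_1-\mu)=0$. In that case $\max(-1,(g_\alpha-\lambda_1)/\mu)=-1$ almost surely, hence $\phi(\lambda_1)=-1\neq 0$.

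So $\phi$, which is convex and nonincreasing, can only be flat at level $-1$. Atoms of $g_\alpha$ produce kinks in $\phi$, where its right slope $-\pik(g_\alpha>\lambda-\mu)/\mu$ jumps, never flat pieces at level $0$. The root $\lambda^*$ is therefore unique, as \cref{thm:bhp} asserts, and the threshold in the corollary is well defined.

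Finally, the integrability you assume is already implicit in the setup: $g_\alpha\in L^2(\pik)\subset L^1(\pik)$ because $u^*\in\mathcal{H}$.
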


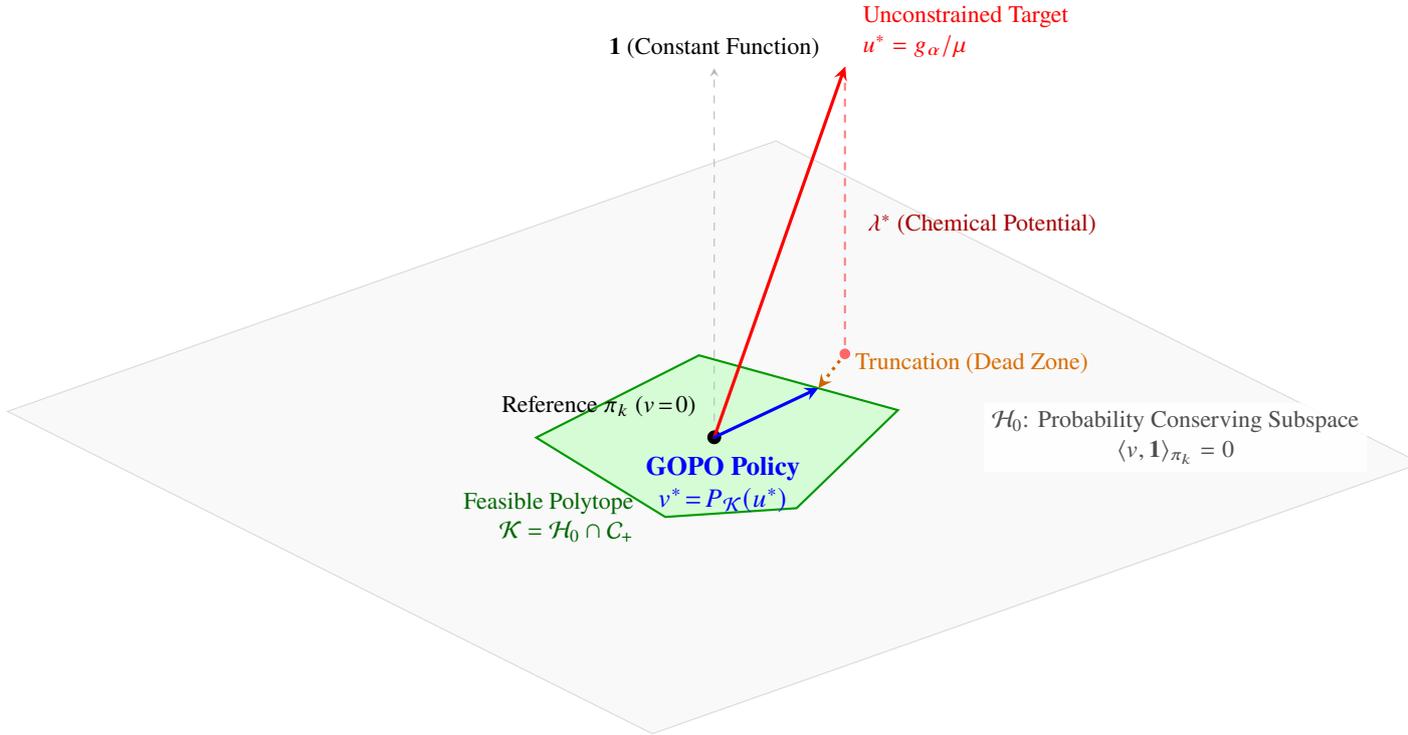
\begin{figure}[htbp]
\centering
\tdplotsetmaincoords{65}{50}

\begin{tikzpicture}[
    tdplot_main_coords,
    scale=3.0,
    >=stealth,
    label_font/.style={font=\footnotesize},
    important_label/.style={font=\small\bfseries},
    callout/.style={thin, gray!50}
]

    \coordinate (O) at (0,0,0);


    \fill[gray!10, opacity=0.45] (-2.2,-2.2,0) -- (2.2,-2.2,0) -- (2.2,2.2,0) -- (-2.2,2.2,0) -- cycle;
    \draw[gray!30, thin] (-2.2,-2.2,0) -- (2.2,-2.2,0) -- (2.2,2.2,0) -- (-2.2,2.2,0) -- cycle;

    \coordinate (K1) at (0.8, -0.2, 0);
    \coordinate (K2) at (0.3, 0.8, 0);
    \coordinate (K3) at (-0.7, 0.5, 0);
    \coordinate (K4) at (-0.5, -0.6, 0);
    \coordinate (K5) at (0.5, -0.7, 0);

    \fill[green!25, opacity=0.6] (K1) -- (K2) -- (K3) -- (K4) -- (K5) -- cycle;
    \draw[green!60!black, thick] (K1) -- (K2) -- (K3) -- (K4) -- (K5) -- cycle;


    \draw[->, gray!55, dashed, thin] (O) -- (0,0,1.8)
        node[above, black, label_font] {$\mathbf{1}$ (Constant Function)};
    \filldraw[black] (O) circle (0.8pt);

    \coordinate (U_star) at (-0.3, 1.0, 1.4);
    \coordinate (U_proj) at (-0.3, 1.0, 0);
    \coordinate (V_star) at (-0.1, 0.68, 0);

    \draw[dashed, red!50, line width=0.8pt] (U_star) -- (U_proj);
    \draw[->, dotted, orange!80!black, thick, line width=1.2pt] (U_proj) -- (V_star);
    \filldraw[red!60] (U_proj) circle (0.6pt);

    \draw[->, very thick, red, line cap=round] (O) -- (U_star);
    \draw[->, very thick, blue, line cap=round] (O) -- (V_star);


    \node[label_font, red, align=left, anchor=south west, xshift=3pt] at (U_star) {
        Unconstrained Target\\$u^* = g_\alpha/\mu$
    };

    \node[label_font, red!65!black, anchor=west, xshift=5pt] at ($(U_star)!0.55!(U_proj)$) {
        $\lambda^*$ (Chemical Potential)
    };

    \node[label_font, orange!85!black, anchor=west, xshift=5pt, yshift=3pt] at ($(U_proj)!0.5!(V_star)$) {
        Truncation (Dead Zone)
    };

    \node[important_label, blue, align=center, anchor=north east, xshift=-3pt, yshift=-22pt] at (V_star) {
        GOPO Policy\\$v^*\!=\!P_{\mathcal{K}}(u^*)$
    };

    \node[label_font, anchor=south east, xshift=-3pt, yshift=4pt] at (O) {Reference $\pi_k$ ($v\!=\!0$)};

    \node[label_font, green!40!black, anchor=east, align=right, xshift=-8pt] at (K5) {
        Feasible Polytope\\$\mathcal{K} = \mathcal{H}_0 \cap \mathcal{C}_+$
    };

    \node[label_font, gray!60!black, align=center,
          fill=white, fill opacity=0.85, text opacity=1,
          inner sep=3pt, rounded corners=1pt] at (1.3, 1.55, 0) {
        $\mathcal{H}_0$: Probability Conserving Subspace\\
        $\langle v, \mathbf{1} \rangle_{\pi_k} = 0$
    };

\end{tikzpicture}

\caption{\textbf{Geometric interpretation of GOPO.} The theory operates in $L^2(\pi_k)$. The reference policy $\pi_k$ sits at the origin ($v=0$). Valid policies must reside in $\mathcal{H}_0$ (gray plane) and satisfy non-negativity, restricting them to the feasible polytope $\mathcal{K}$ (green). The unconstrained target $u^* = g_\alpha/\mu$ (red) is first projected vertically onto $\mathcal{H}_0$ by subtracting the chemical potential $\lambda^*$, then truncated along the plane onto the $\mathcal{K}$ boundary (dead zone), yielding the final bounded GOPO update $v^*$ (blue).}
\label{fig:gopo_geometry}
\end{figure}

\subsection{From Global to Empirical: The GOPO Objective}
\label{subsec:gopo}

Optimizing the distance minimization~\eqref{eq:min_distance} in the global space $L^2(\pik)$ requires access to the full distribution, which is computationally intractable for autoregressive language models. GOPO bridges this gap by transitioning the orthogonal projection to an empirical discrete subspace.

For a given prompt $x$, sample a group of $G$ responses $\mathcal{G} = \{y_1, \ldots, y_G\} \sim \pik(\cdot|x)$. Define the empirical inner product space $\hat{L}^2(\mathcal{G})$ with metric:
\begin{equation}
\label{eq:empirical_ip}
    \langle f, g \rangle_{\mathcal{G}} = \frac{1}{G}\sum_{i=1}^G f(y_i)g(y_i)
\end{equation}
The probability conservation constraint is naturally relaxed to the empirical zero-mean condition:
\begin{equation}
    \hat{\mathcal{H}}_0 = \left\{ v \in \hat{L}^2(\mathcal{G}) : \frac{1}{G}\sum_{i=1}^G v(y_i) = 0 \right\}
\end{equation}

\paragraph{The Vanishing Chemical Potential.}
In standard RLHF pipelines, raw rewards are normalized into group-relative advantages $A_i = r_i - \bar{r}$, satisfying $\sum_{i=1}^G A_i = 0$ by construction. Consequently, the empirical driving vector $\bm{A} = [A_1, \ldots, A_G]^\top$ intrinsically resides in $\hat{\mathcal{H}}_0$. Because the driving force is already orthogonal to $\mathbf{1}$, the projection operator $P_{\hat{\mathcal{H}}_0}$ acts as the identity on $\bm{A}$:
\begin{equation}
\label{eq:vanishing_mu}
    P_{\hat{\mathcal{H}}_0}(\bm{A}) = \bm{A} - \frac{1}{G}\sum_{i=1}^G A_i \cdot \mathbf{1} = \bm{A}
\end{equation}
The chemical potential $\lambda^*$ required to enforce probability conservation \emph{exactly vanishes} at the empirical group level.

\begin{remark}[Structural Simplification]
This vanishing is not a numerical approximation but a \emph{structural consequence} of group normalization. It eliminates the need to solve for $\lambda^*$, collapsing the constrained projection into an unconstrained optimization problem.
\end{remark}

\paragraph{The GOPO Loss Function.}
Substituting the empirical advantage $A_i$ for the driving field $g_\alpha$, and replacing the analytical fluctuation $v(y_i)$ with the parameterized density ratio $\rho_\theta(y_i) = \pi_\theta(y_i|x)/\pik(y_i|x)$, the constrained functional maximization~\eqref{eq:constrained_opt} reduces to:
\begin{equation}
\label{eq:gopo_loss}
\boxed{
    \loss_{\text{GOPO}}(\theta) = -\E_{x \sim \mathcal{D}} \left[ \frac{1}{G}\sum_{i=1}^G \left[ A_i \rho_\theta(y_i) - \frac{\mu}{2}\big(\rho_\theta(y_i) - 1\big)^2 \right] \right]
}
\end{equation}
\paragraph{Structural Comparison with GRPO.}
It is instructive to contrast~\eqref{eq:gopo_loss} with the GRPO loss~\cite{Shao2024GRPO}:
\begin{equation}
\label{eq:grpo_loss}
    \loss_{\text{GRPO}}(\theta) = -\E_{x \sim \mathcal{D}}\left[\frac{1}{G}\sum_{i=1}^G \frac{1}{|y_i|}\sum_{t=1}^{|y_i|} \min\!\Big(\rho_\theta(y_{i,t}) \hat{A}_i,\; \text{clip}(\rho_\theta(y_{i,t}), 1{-}\epsilon, 1{+}\epsilon) \hat{A}_i\Big) - \beta D_{\KL}(\pi_\theta \| \pik)\right]
\end{equation}
where $\hat{A}_i$ denotes the standardized advantage, $\rho_\theta(y_{i,t})$ is the token-level density ratio, and $\epsilon$ is the clip radius. The two losses share the same fundamental motivation---optimizing a policy via group-sampled responses and normalized advantages---but differ in three fundamental ways:
\begin{enumerate}[leftmargin=2em]
    \item \textbf{Trust region mechanism.} GRPO enforces the trust region via \emph{hard clipping} of the token-level ratio $\rho_\theta(y_{i,t})$ alongside an explicit KL penalty, which introduces a non-smooth, piecewise-linear landscape with zero-gradient plateaus wherever $|\rho_\theta - 1| > \epsilon$. GOPO replaces this with a smooth \emph{quadratic penalty} $\frac{\mu}{2}(\rho_\theta - 1)^2$ on the sequence-level ratio that continuously penalizes deviation from the reference---the ratio can move freely while the restoring force grows linearly, never producing a gradient dead zone.
    \item \textbf{Optimization curvature.} The Hessian of GRPO's clipped surrogate is either $0$ (inside the flat clipped region) or undefined (at the clip boundary); its effective curvature is data- and state-dependent. GOPO's Hessian is the constant $\mu$, yielding a uniformly convex landscape with predictable convergence (\cref{thm:constant_curvature}).
    \item \textbf{Gradient behavior at convergence.} As $\rho_\theta \to 1$ and the policy approaches the reference, the GRPO gradient is dominated by the advantage-weighted policy gradient $\hat{A}_i \nabla_\theta \log\pi_\theta$, which scales with the advantage magnitude but does not account for the distance already traveled. GOPO's gradient $-A_i + \mu(\rho_\theta - 1)$ is proportional to the \emph{displacement from equilibrium}, providing a natural deceleration as the policy approaches its target $\rho^* = 1 + A_i/\mu$.
\end{enumerate}
In summary, GOPO trades GRPO's hard combinatorial constraint (clip) for a soft geometric one (projection), gaining smoothness, constant curvature, and a principled equilibrium structure at no additional computational cost.

\paragraph{Bounded GOPO Loss.}
The BHP truncation logic is absorbed into the loss bounds via:
\begin{equation}
\label{eq:gopo_bounded}
    \loss_{\text{GOPO}}^{\text{BHP}}(\theta) = \frac{1}{G}\sum_{i=1}^G \max\!\left(0, \; -A_i\rho_\theta(y_i) + \frac{\mu}{2}\big(\rho_\theta(y_i) - 1\big)^2 \right)
\end{equation}
where the $\max(0, \cdot)$ acts as a ReLU-like floor, halting gradients once $\rho_\theta(y_i) \to 0$ for highly negative advantages.

\section{Theoretical Analysis}
\label{sec:analysis}

\subsection{Constant Curvature and Decoupled Geometry}
\label{subsec:curvature}

The primary vulnerability of PPO, DPO, and GRPO is their reliance on the exponential divergence $D_{\KL}$. The Hessian of KL-based objectives scales non-linearly with the policy output, causing gradient saturation that must be mitigated via heuristic clipping.

\begin{theorem}[Constant Curvature Optimization]
\label{thm:constant_curvature}
Let $\ell(\rho) = -A\rho + \frac{\mu}{2}(\rho - 1)^2$ be the pointwise un-truncated GOPO loss. The Hessian with respect to the density ratio $\rho$ is the constant scalar $\mu$, independent of the advantage signal $A$, the current policy state, or the data distribution.
\end{theorem}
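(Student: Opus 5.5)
The plan is to treat $\ell$ as a scalar polynomial in the single variable $\rho$ and differentiate it twice. The loss is
$\ell(\rho) = -A\rho + \frac{\mu}{2}(\rho-1)^2$. The advantage $A$ and stiffness $\mu$ enter only as fixed coefficients: $A$ is computed from the sampled group before the update, and $\mu$ is a hyperparameter. First compute
\begin{equation*}
\ell'(\rho) = -A + \mu(\rho - 1),
\end{equation*}
which is affine in $\rho$ with slope $\mu$. Differentiating once more kills the constant $-A-\mu$ and gives
\begin{equation*}
\ell''(\rho) = \mu .
\end{equation*}
Since $A$ appears only in the linear term, it drops out at the second derivative. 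The current policy state and the data distribution enter only through the value of $\rho$ at which $\ell''$ is evaluated. Because $\ell''$ does not depend on $\rho$, it is independent of all three.

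Next, pass from a single sample to the group objective~\eqref{eq:gopo_loss}, viewed as a function of the vector $(\rho_1,\dots,\rho_G)$. The loss is a sum of decoupled terms $\frac{1}{G}\sum_i \ell_i(\rho_i)$, so mixed partials vanish. The Hessian is therefore $\frac{\mu}{G} I$, i.e.\ a constant multiple of the identity; this matches the ``$\mu I$'' claim up to the $1/G$ averaging convention. I would also note the consequences. The Hessian is positive definite since $\mu>0$, so $\ell$ is strongly convex in $\rho$. Its unique minimizer is $\rho^* = 1 + A/\mu$, obtained by setting $\ell'=0$, which is consistent with the projection solution of \cref{thm:projection} once the chemical potential has vanished.

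The only point needing care is not computational but one of scope: the claim is about curvature in the ratio coordinate $\rho$, not in the parameters $\theta$. By the chain rule,
\begin{equation*}
\nabla_\theta^2 \ell = \mu\, \nabla_\theta\rho\, \nabla_\theta\rho^\top + \ell'(\rho)\, \nabla_\theta^2\rho ,
\end{equation*}
which is state-dependent. I would state this explicitly, so that the theorem is read as saying the geometry induced by the objective in density-ratio space is flat (constant curvature). This contrasts with the $\sigma(m)(1-\sigma(m))$ curvature of KL in log-ratio coordinates, and any state dependence comes only from the parameterization.
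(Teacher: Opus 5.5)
Your proof is correct and is the paper's argument: compute $\ell'(\rho) = -A + \mu(\rho-1)$ and differentiate once more to get $\ell''(\rho) = \mu$. Your extra remarks are also accurate but go beyond what the paper states: the group-level Hessian is $\frac{\mu}{G}I$, and by the chain rule the curvature in $\theta$ is state-dependent.
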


\begin{proof}
$\nabla_\rho \ell = -A + \mu(\rho - 1)$, hence $\nabla^2_\rho \ell = \mu$.
\end{proof}

This constant curvature has immediate consequences for the decoupling of design axes:

\begin{corollary}[Structural Decoupling]
\label{cor:decoupling}
In the GOPO objective, the first-order driving force depends on the advantage signal:
\begin{equation}
    \nabla_\rho \ell = -A + \mu(\rho - 1)
\end{equation}
while the second-order curvature $\nabla^2_\rho \ell = \mu$ is independent of $A$. Consequently, changing the advantage weighting (sampling geometry) does not alter the optimization landscape curvature (optimization geometry), and vice versa.
\end{corollary}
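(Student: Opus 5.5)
The plan is to obtain both parts of Corollary~\ref{cor:decoupling} from the explicit derivatives already computed in the proof of Theorem~\ref{thm:constant_curvature}. The pointwise loss $\ell(\rho) = -A\rho + \frac{\mu}{2}(\rho-1)^2$ is a polynomial of degree two in $\rho$. Its first derivative is $\nabla_\rho \ell = -A + \mu(\rho-1)$, in which $A$ appears linearly and additively. Differentiating once more removes every term containing $A$, since $A$ multiplies only the linear monomial $\rho$. What remains is $\nabla^2_\rho \ell = \mu$.

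The substantive step is to interpret ``independence'' precisely. I would treat $\ell$ as a function $\ell(\rho; A, \mu)$ and compute the mixed partial derivatives:
\[
\frac{\partial}{\partial A}\nabla^2_\rho \ell = 0,
\qquad
\frac{\partial}{\partial \mu}\nabla_\rho \ell = \rho - 1 .
\]
The first identity says the curvature does not respond to the advantage weighting. The second says that changing $\mu$ leaves the advantage-driven part $-A$ of the gradient untouched. This is the ``vice versa'' half of the statement: $\mu$ enters the gradient only through the restoring term $\mu(\rho-1)$, never through the driving term.

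To cover the modulated setting, I would repeat the argument with $A$ replaced by any reweighted signal, such as $g_\alpha = E_\alpha A$ from \eqref{eq:driving}. This step requires treating the escort multiplier as a fixed weight, evaluated with a stop-gradient, rather than as a function of $\rho$. Under that convention the loss stays linear in the driving field, and the computation above carries over unchanged.

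The only real obstacle is scope. If $E_\alpha$ is differentiated through $\rho$, the driving term becomes nonlinear in $\rho$ and the Hessian picks up $\alpha$-dependent terms. Likewise, when passing to parameters $\theta$, the Gauss--Newton structure gives a Hessian of the form $\mu\,\nabla\rho\,\nabla\rho^\top$ plus a term $(-A + \mu(\rho-1))\nabla^2\rho$, which does depend on $A$. I would therefore state explicitly that the corollary concerns the ratio coordinate $\rho$ with detached weights, consistent with Theorem~\ref{thm:constant_curvature}. Within that scope the proof is a two-line calculation.
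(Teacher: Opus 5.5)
Your proposal is correct and follows the paper's own route: the corollary is read off directly from the two-line computation in the proof of \cref{thm:constant_curvature}, $\nabla_\rho \ell = -A + \mu(\rho-1)$ and $\nabla^2_\rho \ell = \mu$. Your mixed-partial formalization of ``independence'' and your scope caveats are accurate refinements that the paper leaves implicit. The first caveat is that the escort weight $E_\alpha$ must be treated as detached. The second is that the statement holds in the ratio coordinate $\rho$ rather than in $\theta$, where the Hessian acquires the $A$-dependent term $(-A+\mu(\rho-1))\nabla^2_\theta\rho$.
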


\begin{remark}[Contrast with KL-Based Methods]
In DPO, the effective loss is $\ell_{\text{DPO}} \approx -\log\sigma(\beta m)$, where $m$ is the logit margin. The local curvature $\beta^2\sigma(m)(1-\sigma(m))$ depends on both the temperature $\beta$ and the current margin $m$. Changing $\beta$ simultaneously alters the gradient magnitude \emph{and} the stability profile. This data-dependent curvature is the root cause of gradient saturation.
\end{remark}

\begin{corollary}[Global Linear Convergence]
\label{cor:convergence}
Gradient descent in ratio space on the GOPO objective follows the linear system:
\begin{equation}
    \rho_{k+1} - \rho^* = (1 - \eta\mu)(\rho_k - \rho^*)
\end{equation}
where $\rho^* = 1 + A/\mu$. For step size $0 < \eta < 2/\mu$, this exhibits global contraction to the unique equilibrium at rate $|1 - \eta\mu|$, independent of the advantage distribution.
\end{corollary}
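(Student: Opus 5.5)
The plan is to use the explicit gradient formula from \cref{thm:constant_curvature} and reduce the claim to the contraction of a scalar affine recursion. Gradient descent in ratio space with step size $\eta>0$ on the pointwise loss $\ell(\rho) = -A\rho + \frac{\mu}{2}(\rho-1)^2$ is the iteration
\begin{equation*}
\rho_{k+1} = \rho_k - \eta\,\nabla_\rho \ell(\rho_k) = \rho_k - \eta\big(-A + \mu(\rho_k - 1)\big).
\end{equation*}
The first step is to identify the equilibrium. Since $\mu>0$, $\ell$ is strictly convex by \cref{thm:constant_curvature}, so its unique stationary point is its unique global minimizer. Setting $\nabla_\rho \ell = 0$ gives $\mu(\rho^*-1) = A$, that is, $\rho^* = 1 + A/\mu$. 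This point is also the unique fixed point of the iteration, because $\eta \neq 0$.

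The second step is to rewrite the update in error coordinates. Using $A = \mu(\rho^*-1)$, the gradient becomes $\nabla_\rho\ell(\rho_k) = \mu(\rho_k - \rho^*)$. Subtracting $\rho^*$ from both sides of the update then gives exactly
\begin{equation*}
\rho_{k+1} - \rho^* = (1-\eta\mu)(\rho_k - \rho^*).
\end{equation*}
Note that $A$ has disappeared from the error dynamics, and the contraction factor is $1-\eta\mu$. This is the concrete form of the decoupling in \cref{cor:decoupling}: the advantage moves the target, but it does not affect the rate.

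The third step is to iterate this relation, which gives $|\rho_k - \rho^*| = |1-\eta\mu|^k |\rho_0 - \rho^*|$. For $0<\eta<2/\mu$ we have $-1 < 1-\eta\mu < 1$. Hence the error decays geometrically at rate $|1-\eta\mu|$ from every initialization $\rho_0$, which is what makes the convergence global. The rate depends only on $\eta$ and $\mu$, so it is the same for every advantage value $A$, and therefore uniform across any advantage distribution. If the group-averaged loss in \eqref{eq:gopo_loss} is viewed as a function of the vector $(\rho_1,\dots,\rho_G)$, it is separable with Hessian $\frac{\mu}{G} I$. The identical argument then applies coordinatewise, with the effective step size rescaled by $1/G$ according to how the average is normalized.

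There is no real obstacle here. The only point needing care is scope: the statement treats $\rho$ as a free variable, so the proof covers the un-truncated loss $\ell$ in ratio space. It does not cover the parametric dynamics in $\theta$, nor the BHP-truncated loss \eqref{eq:gopo_bounded}, whose $\max(0,\cdot)$ floor creates a flat region. The proof will state this scope explicitly.
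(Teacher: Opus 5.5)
Your proposal is correct and follows the same route the paper implicitly takes: the paper gives no separate proof, reading the corollary directly off the gradient formula $\nabla_\rho \ell = -A + \mu(\rho-1) = \mu(\rho-\rho^*)$ from the proof of \cref{thm:constant_curvature}, which makes each gradient step an affine recursion with factor $1-\eta\mu$. Your scoping remarks are accurate additions that the paper leaves unstated: the result applies to the un-truncated loss with $\rho$ treated as a free variable, not to the $\theta$-dynamics or the BHP-truncated loss, and the group average rescales the effective step size by $1/G$.
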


\subsection{Gradient Dynamics: Non-Saturation Guarantee}
\label{subsec:gradient}

We quantify the gradient behavior to contrast with KL-based methods.

\paragraph{Gradient Saturation in DPO.}
For logistic losses, the gradient magnitude satisfies $|\nabla_m \ell_{\text{DPO}}| = |\beta\sigma(m)(1-\sigma(m))| \leq \beta/4$, and crucially, $|\nabla_m \ell_{\text{DPO}}| \to 0$ exponentially as $|m| \to \infty$.

\paragraph{Non-Saturating Gradient in GOPO.}
For the GOPO loss, the gradient magnitude is:
\begin{equation}
    |\nabla_\rho \ell_{\text{GOPO}}| = |{-A + \mu(\rho - 1)}| = \mu|\rho - \rho^*|
\end{equation}
This is exactly proportional to the distance from equilibrium. For any non-equilibrium state with $|\rho - \rho^*| \geq \delta$:
\begin{equation}
    |\nabla_{\text{GOPO}}| \geq \mu\delta
\end{equation}
The gradient maintains a \emph{linear} driving force that never vanishes away from equilibrium, preventing the saturation endemic to KL-based methods.

\subsection{Dead Zone Dynamics and Gradient Hard Stops}
\label{subsec:dead_zone}

The BHP truncation establishes an implicit ``dead zone'' for gradient flow. Taking the derivative of the bounded GOPO loss~\eqref{eq:gopo_bounded}:
\begin{equation}
\label{eq:dead_zone_grad}
    \nabla_\theta \loss_{\text{GOPO}}^{\text{BHP}} \propto \frac{1}{G}\sum_{i=1}^G \underbrace{\mathbb{I}\!\big(\rho_\theta(y_i) > 0\big)}_{\text{Dead Zone Gate}} \cdot \underbrace{\big(-A_i + \mu(\rho_\theta(y_i) - 1)\big)}_{\text{Restoring Force}} \nabla_\theta \rho_\theta(y_i)
\end{equation}
When an action receives an extremely negative advantage $A_i \ll 0$, the ratio $\rho_\theta(y_i)$ is rapidly suppressed. Once it approaches the physical boundary $\rho_\theta \to 0$ (equivalently $v(y_i) \to -1$), the indicator function triggers a \emph{hard stop}: the gradient for this action is exactly zeroed out.

This mechanism prevents the network from wasting representational capacity on already-suppressed outputs, redirecting gradient bandwidth exclusively toward distinguishing among higher-quality responses.

\subsection{Connection to $\chi^2$ Divergence}
\label{subsec:chi2}

The Hilbert space framework provides a natural connection to statistical divergences.

\begin{proposition}[$\chi^2$ Interpretation]
\label{prop:chi2}
The dissipation term in the GOPO functional is proportional to the Pearson $\chi^2$ divergence:
\begin{equation}
    \frac{1}{2}\E_{\pik}\!\left[(v(y))^2\right] = \frac{1}{2}\E_{\pik}\!\left[\left(\frac{\pi(y)}{\pik(y)} - 1\right)^2\right] = D_{\chi^2}(\pi \| \pik)
\end{equation}
Thus, the GOPO objective can be equivalently read as maximizing advantage-weighted work subject to a $\chi^2$ trust region.
\end{proposition}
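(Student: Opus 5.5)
The plan is a direct substitution argument; the only care needed is in fixing the normalization convention for the $\chi^2$ divergence. First, I would unfold the dissipation term of $\mathcal{J}(v)$ from \eqref{eq:constrained_opt}, namely $\frac{\mu}{2}\|v\|^2_{\pik} = \frac{\mu}{2}\E_{\pik}[v(y)^2]$. By the definition of the density fluctuation field \eqref{eq:fluctuation}, $v(y) = \pi(y)/\pik(y) - 1$, so pointwise substitution immediately gives the middle expression $\frac{1}{2}\E_{\pik}[(\pi(y)/\pik(y)-1)^2]$, with the overall factor $\mu$ carried separately.

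Next, I would identify this with the Pearson $\chi^2$ divergence, viewed as the $f$-divergence $D_f(\pi\|\pik) = \E_{\pik}[f(\pi/\pik)]$ generated by $f(t) = \frac{1}{2}(t-1)^2$. This $f$ is convex, satisfies $f(1)=0$, and has $f''(1)=1$, matching the second-order normalization of KL. Under this convention the displayed equality holds verbatim. Under the more common convention $f(t) = (t-1)^2$, the same computation gives $\frac{1}{2}\E_{\pik}[v^2] = \frac{1}{2}\chi^2(\pi\|\pik)$. Either way, the dissipation term equals $\mu D_{\chi^2}(\pi\|\pik)$ up to a fixed numerical constant, which is exactly the ``proportional'' claim. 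I expect this bookkeeping to be the only genuine obstacle: the equality sign in the display is convention-dependent, and I would state the convention explicitly rather than leave the factor $\frac{1}{2}$ ambiguous.

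As a sanity check that the quantity really is a divergence, I would expand the square and use the conservation identity \eqref{eq:zero_mean}, $\E_{\pik}[v]=0$. This gives
\[
\E_{\pik}[v^2] = \E_{\pik}\!\left[(\pi/\pik)^2\right] - 1 = \sum_y \frac{\pi(y)^2}{\pik(y)} - 1 \ge 0,
\]
with equality iff $\pi = \pik$, by Cauchy--Schwarz in $L^2(\pik)$.

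Finally, for the trust-region reading, I would rewrite the objective as $\mathcal{J}(v) = \langle g_\alpha, v\rangle_{\pik} - \mu D_{\chi^2}(\pi\|\pik)$, which is the Lagrangian of the problem
\[
\max_{v\in\mathcal{H}_0} \; \langle g_\alpha, v\rangle_{\pik} \quad \text{subject to} \quad D_{\chi^2}(\pi\|\pik)\le\delta .
\]
The constraint set is a closed ball intersected with the subspace $\mathcal{H}_0$, and the objective is linear. Hence Slater's condition holds (take $v=0$), and strong duality applies. Each $\delta>0$ then corresponds to a multiplier $\mu>0$ whose penalized maximizer, given explicitly by \cref{thm:projection}, solves the constrained problem. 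Concretely, I would choose $\delta = \frac{1}{2}\|P_{\mathcal{H}_0}g_\alpha\|^2_{\pik}/\mu^2$, so that the penalized solution lies exactly on the boundary of the trust region.
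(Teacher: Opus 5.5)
Your proposal is correct and follows the same essentially definitional route the paper relies on. You substitute $v=\pi/\pik-1$ into the dissipation term and then read the penalized functional as the Lagrangian relaxation of a $\chi^2$-ball problem, which is what the paper's appendix duality proposition asserts. Two points make your version more careful than the paper's: you note that the displayed equality holds only under the normalization $f(t)=\frac{1}{2}(t-1)^2$ (under the standard Pearson convention the dissipation is $\frac{\mu}{2}\chi^2(\pi\|\pik)$), and you keep the constraint $v\in\mathcal{H}_0$ in the trust-region problem, whereas the appendix drops it and reports $v^*=g_\alpha/\mu$.
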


\begin{proposition}[Total Variation Bound]
\label{prop:tv}
Bounding $\E[v^2] \leq \epsilon$ via the $\chi^2$ penalty guarantees, by Jensen's inequality:
\begin{equation}
    \mathrm{TV}(\pi, \pik) = \frac{1}{2}\E[|v|] \leq \frac{1}{2}\sqrt{\E[v^2]} \leq \frac{\sqrt{\epsilon}}{2}
\end{equation}
providing a distributional stability guarantee in Total Variation distance.
\end{proposition}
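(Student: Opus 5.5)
The plan has two parts. First, identify the total variation distance with $\frac{1}{2}\E_{\pik}[|v|]$. Second, control that expectation by the second moment through Jensen's inequality (equivalently Cauchy--Schwarz in $L^2(\pik)$).

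For the identity, start from the definition $\mathrm{TV}(\pi,\pik) = \frac{1}{2}\sum_y |\pi(y) - \pik(y)|$. On the support of $\pik$, write $\pi(y) - \pik(y) = \pik(y)\big(\pi(y)/\pik(y) - 1\big) = \pik(y)\, v(y)$, using the fluctuation field \eqref{eq:fluctuation}. Summing gives $\frac{1}{2}\sum_y \pik(y)|v(y)| = \frac{1}{2}\E_{\pik}[|v|]$.

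The identity needs absolute continuity $\pi \ll \pik$. This is implicit in the paper's setup, since $v$ is defined as a ratio with $\pik$ in the denominator and is assumed to lie in $L^2(\pik)$. I would state this assumption explicitly. Without it, the $\chi^2$ quantity is infinite, so the hypothesis $\E[v^2]\le\epsilon$ already excludes mass of $\pi$ outside the support of $\pik$.

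For the inequality, apply Jensen to the concave function $t\mapsto\sqrt{t}$ with the nonnegative random variable $v(y)^2$, $y\sim\pik$:
\[
\E_{\pik}[|v|] = \E_{\pik}\big[\sqrt{v^2}\big] \le \sqrt{\E_{\pik}[v^2]}.
\]
Alternatively, use Cauchy--Schwarz in $L^2(\pik)$: $\langle |v|, \mathbf{1}\rangle_{\pik} \le \|v\|_{\pik}\|\mathbf{1}\|_{\pik} = \|v\|_{\pik}$, with $\|\mathbf{1}\|_{\pik}=1$ as in the proof of \cref{thm:projection}. Multiplying by $\frac12$ and inserting the hypothesis $\E[v^2]\le\epsilon$ gives $\mathrm{TV}\le \sqrt{\epsilon}/2$ by monotonicity of the square root.

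No step is a real obstacle. The only care needed is the support/absolute-continuity point in the first step, and noting that the expectation is taken under $\pik$ (the measure defining $\mathcal{H}$). I would also remark, without relying on it, that by \cref{prop:chi2} the hypothesis is equivalent to $D_{\chi^2}(\pi\|\pik)\le \epsilon/2$ under the paper's normalization. The bound can therefore be restated as $\mathrm{TV}\le \sqrt{D_{\chi^2}/2}$.
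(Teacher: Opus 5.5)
Your argument is the paper's own: identify $\mathrm{TV}(\pi,\pik)=\tfrac12\E_{\pik}[|v|]$ and apply Jensen to $\sqrt{\cdot}$ (Cauchy--Schwarz is an equivalent alternative), and making $\pi\ll\pik$ explicit is a genuine improvement over the paper's one-line statement. One correction to how you justify that assumption: it comes from the parametrization $\pi=\pik(1+v)$ (or from $\E_{\pik}[v]=0$ with $\pi$ a probability measure), not from $v\in L^2(\pik)$ or from $\E_{\pik}[v^2]\le\epsilon$, because an expectation under $\pik$ cannot see mass of $\pi$ off the support of $\pik$; for $\pik=\delta_a$, $\pi=\delta_b$ one has $\E_{\pik}[v^2]=1$ but $\mathrm{TV}=1>\tfrac12$, so your remark holds only if $\E[v^2]$ is read as the $\chi^2$ divergence with its $+\infty$ convention.
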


\subsection{Comparison with Parametric L2 Regularization}
\label{subsec:l2}

A natural question is whether GOPO's quadratic penalty differs from standard weight decay $\frac{\lambda}{2}\|\theta - \theta_0\|^2$. The distinction is fundamental:

\begin{enumerate}[leftmargin=2em]
    \item \textbf{Functional vs. Parametric}: GOPO's penalty $\frac{\mu}{2}\E[v^2]$ acts in the function space of probability ratios. It penalizes deviations where the policy assigns probability mass. Weight decay acts uniformly in parameter space, blind to output-level consequences.
    \item \textbf{Adaptive Restoring Force}: The gradient $\mu(\rho - 1)$ pulls each specific output ratio back toward unity. Weight decay suppresses all weights uniformly, potentially hindering the formation of sharp, high-confidence reasoning chains.
    \item \textbf{Geometric Guarantee}: The $\chi^2$ penalty provides distributional bounds (\cref{prop:tv}). Weight decay offers no such output-space guarantee.
\end{enumerate}

\section{Algorithm and Implementation}
\label{sec:algorithm}

\begin{figure}[htbp]
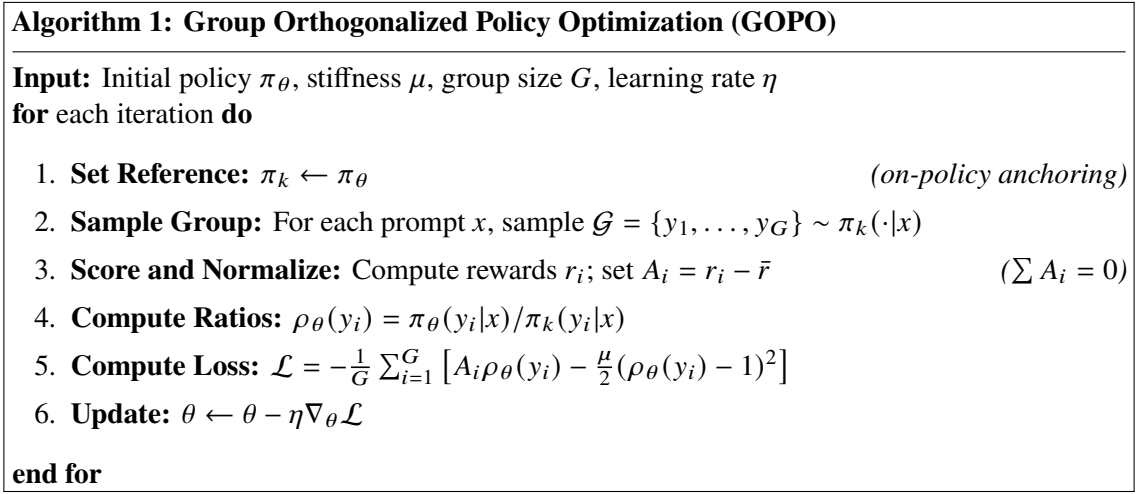

\centering
\fbox{\parbox{0.92\textwidth}{
\textbf{Algorithm 1: Group Orthogonalized Policy Optimization (GOPO)}
\vspace{0.5em}
\hrule
\vspace{0.5em}
\textbf{Input:} Initial policy $\pi_\theta$, stiffness $\mu$, group size $G$, learning rate $\eta$

\textbf{for} each iteration \textbf{do}
\begin{enumerate}[leftmargin=2em, itemsep=0pt]
    \item \textbf{Set Reference:} $\pik \leftarrow \pi_\theta$ \hfill \textit{(on-policy anchoring)}
    \item \textbf{Sample Group:} For each prompt $x$, sample $\mathcal{G} = \{y_1, \ldots, y_G\} \sim \pik(\cdot|x)$
    \item \textbf{Score and Normalize:} Compute rewards $r_i$; set $A_i = r_i - \bar{r}$ \hfill \textit{($\sum A_i = 0$)}
    \item \textbf{Compute Ratios:} $\rho_\theta(y_i) = \pi_\theta(y_i|x) / \pik(y_i|x)$
    \item \textbf{Compute Loss:} $\loss = -\frac{1}{G}\sum_{i=1}^G \big[A_i\rho_\theta(y_i) - \frac{\mu}{2}(\rho_\theta(y_i) - 1)^2\big]$
    \item \textbf{Update:} $\theta \leftarrow \theta - \eta\nabla_\theta \loss$
\end{enumerate}
\textbf{end for}
}}
\caption{GOPO algorithm. Step 3 guarantees that the advantage vector lies in the zero-mean subspace $\hat{\mathcal{H}}_0$, eliminating the chemical potential. Steps 4--5 implement the empirical orthogonal projection with quadratic dissipation.}
\label{alg:gopo}
\end{figure}

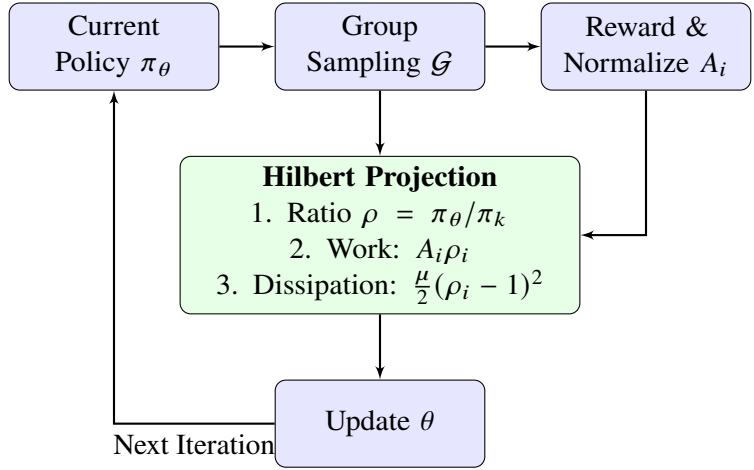
\begin{figure}[htbp]
\centering
\begin{tikzpicture}[
    node distance=1.5cm,
    auto,
    block/.style={
      rectangle,
      draw,
      fill=blue!10,
      text width=6.5em,
      text centered,
      rounded corners,
      minimum height=3em
    },
    line/.style={draw, -latex', thick},
]
    \node [block] (policy) {Current Policy $\pi_\theta$};
    \node [block, right of=policy, node distance=3.5cm] (sampling) {Group Sampling $\mathcal{G}$};
    \node [block, right of=sampling, node distance=3.5cm] (eval) {Reward \& Normalize $A_i$};
    
    \node [block, below of=sampling, node distance=2.5cm, fill=green!10, text width=13em] (gopo) {\textbf{Hilbert Projection}\\ 1. Ratio $\rho = \pi_\theta/\pik$ \\ 2. Work: $A_i \rho_i$ \\ 3. Dissipation: $\frac{\mu}{2}(\rho_i - 1)^2$};
    
    \node [block, below of=gopo, node distance=2.5cm] (update) {Update $\theta$};
    
    \path [line] (policy) -- (sampling);
    \path [line] (sampling) -- (eval);
    \path [line] (eval) |- (gopo);
    \path [line] (sampling) -- (gopo);
    \path [line] (gopo) -- (update);
    \path [line] (update) -| node [near start] {Next Iteration} (policy);
    
\end{tikzpicture}
\caption{GOPO flowchart. The core operation (green) implements the empirical orthogonal projection: the advantage signal provides the linear driving force, while the quadratic dissipation term provides the constant-curvature regularization.}
\label{fig:flowchart}
\end{figure}

\paragraph{Implementation Notes.}
GOPO introduces no additional model forward/backward passes compared to GRPO. The only overhead is the computation of $(\rho_\theta - 1)^2$, which is negligible. The reference policy $\pik$ is set to the policy from the start of each iteration (on-policy anchoring), ensuring that the trust region $\rho \approx 1$ holds locally. No critic network is required.

\section{Experiments}
\label{sec:experiments}

We evaluate GOPO against four strong baselines on mathematical reasoning tasks, focusing on out-of-distribution generalization performance.

\subsection{Setup}

\paragraph{Model and Data.} We use Qwen3-1.7B as the base model, trained with the VERL framework~\cite{Sheng2024HybridFlow} on 4$\times$ RTX 4090 GPUs. To stress-test sample efficiency and generalization, we deliberately use a small training set: approximately 10\% of MATH Level 3 problems (sampled with seed 42). Validation is performed every 10 training steps on 100 held-out MATH Level 4 problems---a strictly harder difficulty tier not seen during training---to measure out-of-distribution generalization. All methods share identical hyperparameters: batch size 48, learning rate $2 \times 10^{-6}$, 8 epochs over the training data, and $G=6$ rollout generations per prompt.

\paragraph{Baselines.}
\begin{itemize}[leftmargin=2em]
    \item \textbf{OPO}~\cite{Wang2025OPO}: The full Orthogonalized Policy Optimization with importance-sampling reweighting ($\omega_\alpha$), from which GOPO is derived.
    \item \textbf{GRPO}~\cite{Shao2024GRPO}: Token-level policy gradient with group-normalized advantages and PPO-style ratio clipping ($\epsilon = 0.2$).
    \item \textbf{GSPO}: Sentence-level variant of GRPO with step-level advantage normalization.
    \item \textbf{DAPO}~\cite{Yu2025DAPO}: Advanced baseline with asymmetric clip bounds, unnormalized advantages, and overlong reward shaping.
\end{itemize}
GOPO uses stiffness $\mu = 0.5$, escort exponent $\alpha = 0.5$, and on-policy anchoring.

\subsection{Results}

\paragraph{Overall Performance.} \cref{tab:results} summarizes the final metrics for all methods.

\begin{table}[htbp]
\centering
\caption{Comparison of policy optimization algorithms on MATH benchmarks. All methods use Qwen3-1.7B trained on $\sim$10\% of MATH Level 3, validated on 100 MATH Level 4 problems. Reward is averaged over the entire training process.}
\label{tab:results}
\begin{tabular}{lcccc}
\toprule
\textbf{Algorithm} & \textbf{Mean Reward}$\uparrow$ & \textbf{Val Acc (L4)}$\uparrow$ & \textbf{Grad Norm} & \textbf{Entropy} \\
\midrule
GRPO & 0.544 & 44\% & 0.674 & 0.115 \\
DAPO & 0.548 & 44\% & 0.213 & 0.126 \\
GSPO & 0.553 & \textbf{48\%} & 0.623 & 0.128 \\
OPO  & \textbf{0.558} & \textbf{48\%} & 1.279 & 0.126 \\
\textbf{GOPO (Ours)} & 0.555 & 47\% & 1.029 & \textbf{0.134} \\
\bottomrule
\end{tabular}
\end{table}

\paragraph{Analysis.}
\begin{itemize}[leftmargin=2em]
    \item \textbf{Generalization vs.\ Training Reward.} OPO and GOPO achieve the highest mean rewards (0.558 and 0.555) over the training process, which directly translates to their superior generalization performance (48\% and 47\% validation accuracy). In contrast, GRPO struggles with the lowest mean reward (0.544) and poorest generalization (44\%), struggling to efficiently learn from the small training set. GOPO achieves competitive validation accuracy while maintaining an upward trajectory (\cref{fig:training_curves}b).
    \item \textbf{Gradient Dynamics.} OPO and GOPO maintain substantially higher gradient norms (1.28 and 1.03) than ratio-clipped methods (GRPO: 0.67, GSPO: 0.62, DAPO: 0.21), empirically confirming the non-saturation prediction of \cref{thm:constant_curvature}. DAPO's severely diminished gradients ($\sim$0.21) indicate the conservative clipping is overly restrictive.
    \item \textbf{Entropy Preservation.} GOPO maintains the highest policy entropy (0.134) among all methods. We attribute this to the sequence-level nature of the GOPO update: unlike token-level methods (GRPO, DAPO) that impose dense per-token supervision forcing rapid mode collapse, GOPO aligns total trajectory probability without micromanaging individual tokens.
\end{itemize}

\paragraph{Training Dynamics.} \cref{fig:training_curves} shows the full training trajectories across four diagnostic metrics.

\begin{figure}[htbp]
\centering
\includegraphics[width=\textwidth]{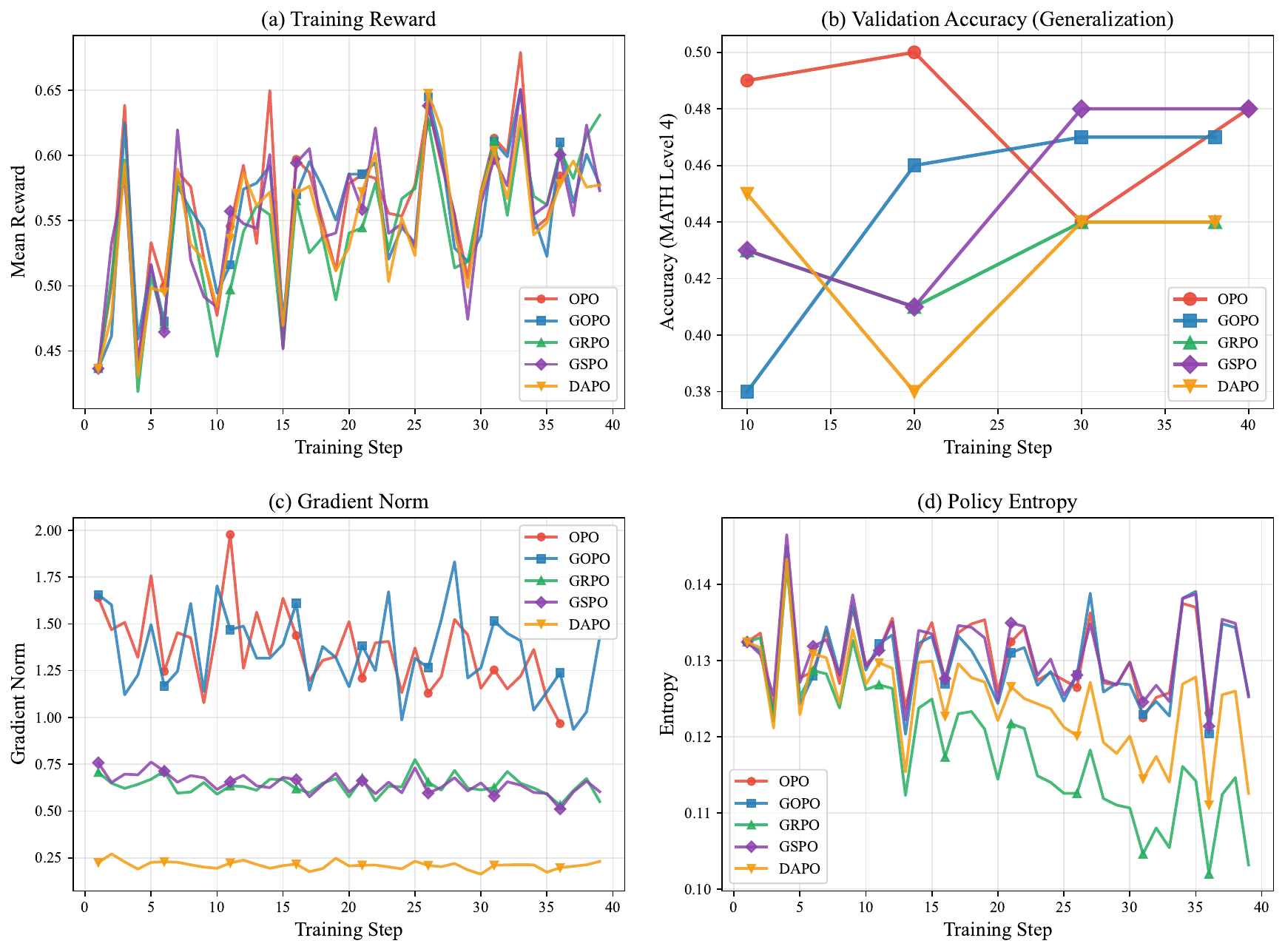}
\caption{\textbf{Training dynamics comparison.} (a) Training reward: OPO and GOPO achieve the highest mean rewards over the training process. (b) Validation accuracy on MATH Level 4: OPO and GSPO lead at 48\%, with GOPO at 47\% and monotonically improving. (c) Gradient norm: OPO/GOPO maintain healthy norms throughout, while DAPO exhibits severe gradient saturation. (d) Policy entropy: GOPO preserves the most diversity, preventing premature mode collapse.}
\label{fig:training_curves}
\end{figure}

\paragraph{Generalization Analysis.} The validation accuracy trajectories (\cref{fig:training_curves}b) reveal important dynamics:
\begin{itemize}[leftmargin=2em]
    \item GOPO shows \emph{monotonic improvement} in generalization (38\% $\to$ 46\% $\to$ 47\%), suggesting the algorithm has not yet plateaued and would benefit from longer training.
    \item OPO starts strong (49\% at step 10) but exhibits non-monotonic behavior (50\% $\to$ 44\% $\to$ 48\%), consistent with the importance-sampling variance inherent in the full OPO framework.
    \item GRPO and DAPO plateau at 44\%, despite GRPO's higher training reward---a signature of the reward hacking phenomenon where clipping-based methods exploit the training distribution rather than learning generalizable reasoning patterns.
\end{itemize}

\section{Discussion}
\label{sec:discussion}

\paragraph{Why Hilbert Space?}
Our framework is not merely a notational change but a deliberate choice of the mathematical arena best suited to the problem's intrinsic structure. Policy alignment fundamentally operates in an abstract, high-dimensional space where the action vocabulary can range from tens of thousands of tokens to unbounded sequences. In finite-dimensional Euclidean space, optimization over such objects requires explicit coordinates and encounters the curse of dimensionality. The Hilbert space $L^2(\pik)$, by contrast, is purpose-built for abstract infinite-dimensional analysis: it provides a complete inner product structure that makes distances, angles, and projections well-defined \emph{regardless of the underlying dimensionality}. Consequently, mappings and constraints that appear intractably nonlinear on the probability simplex---such as the normalization condition $\sum_y \pi(y) = 1$---collapse into elementary linear-algebraic operations (inner products, closed subspaces, orthogonal projections) once lifted into this space. The probability simplex, equipped with KL geometry, fundamentally entangles the sampling signal with the optimization curvature; the Hilbert space disentangles them entirely. The Hilbert Projection Theorem provides a principled replacement for heuristic clipping, and the chemical potential emerges as a natural geometric by-product of the projection rather than an ad-hoc normalization constant.

\paragraph{Abstract Representations and the Geometry of Distance.}
The adoption of Hilbert space is not without precedent in sciences that confront fundamentally abstract, high-dimensional state spaces. In quantum mechanics, the state of a physical system is a vector in a Hilbert space, and all measurable predictions---probabilities, expectation values, transition amplitudes---reduce to inner products and projections within that space~\cite{VonNeumann1932}. The power of this formalism lies precisely in its \emph{coordinate-free} nature: the mathematical apparatus operates identically whether the state space is two-dimensional (a qubit) or infinite-dimensional (a quantum field), because the inner product provides a complete metric structure---distance, angle, and projection---independent of any particular basis.

A structurally parallel situation arises in language model alignment. The output of an LLM is a probability distribution over an astronomically large discrete space of possible utterances. What we seek to optimize is not any individual coordinate of this distribution, but its \emph{global geometric relationship} to a target: how ``close'' the aligned policy is to the ideal, constrained by the requirement of remaining a valid distribution. The $L^2(\pik)$ framework provides exactly the right notion of distance ($\|v - u^*\|_{\pik}$) for this task---one that is complete, geometrically meaningful, and agnostic to the dimensionality of the action space. The alignment problem thus reduces to its purest geometric essence: \emph{find the nearest valid point to an ideal target}, an operation that Hilbert space is uniquely equipped to perform via its Projection Theorem.

This perspective also connects to a broader trend in computational cognitive science, where abstract representational spaces---from semantic embeddings to conceptual structures---are increasingly modeled as inner product spaces in which ``similarity'' corresponds to geometric proximity~\cite{Gardenfors2000,Busemeyer2012}. The Hilbert space framework may thus offer a principled mathematical language not only for policy optimization but for any domain where the fundamental task is to navigate distances between abstract, high-dimensional representations under structural constraints.

\paragraph{The Role of Group Normalization.}
The vanishing chemical potential at the group level is a structural consequence of the geometry. Group normalization ensures that the advantage vector already lies in the zero-mean subspace $\hat{\mathcal{H}}_0$, so no projection is needed along the $\mathbf{1}$ direction. This eliminates an entire class of potential failure modes related to improper constraint enforcement.

\paragraph{Relation to Existing Methods.}
\begin{itemize}[leftmargin=2em]
    \item Setting $\mu \to 0$ recovers an unconstrained policy gradient (no regularization).
    \item GOPO's quadratic penalty can be viewed as the Bregman divergence induced by the Euclidean mirror map $\Psi(v) = \frac{1}{2}\|v\|^2$, connecting to the mirror descent literature. However, the Hilbert space derivation is more direct and reveals additional structure (subspaces, projections, chemical potential).
    \item The dead-zone mechanism (\cref{subsec:dead_zone}) provides a principled alternative to the clip-higher strategy of DAPO~\cite{Yu2025DAPO}, with the advantage of arising from the geometry rather than being hand-designed.
\end{itemize}

\paragraph{Limitations.}
GOPO introduces the stiffness parameter $\mu$, which may require tuning. Our experiments focus on mathematical reasoning; validation on diverse domains (instruction following, code generation) remains future work. The BHP truncation is currently implemented via a soft ReLU approximation rather than the exact hard threshold derived in \cref{thm:bhp}; studying the impact of this approximation gap is an open direction.

\section{Conclusion}
\label{sec:conclusion}

We have presented \textbf{Group Orthogonalized Policy Optimization (GOPO)}, an alignment algorithm derived entirely from the geometry of Hilbert function spaces. By lifting policy optimization from the probability simplex into $L^2(\pik)$, we transform the nonlinear normalization constraint into a linear orthogonality condition. The alignment objective itself emerges algebraically from the geometric principle of minimum distance to the unconstrained target, and its closed-form solution is given by the Hilbert Projection Theorem. The Bounded Hilbert Projection extends this to enforce non-negativity, yielding exact sparsity for catastrophic outputs. At the empirical group level, the chemical potential vanishes under standard advantage normalization, producing a simple loss with constant curvature, non-saturating gradients, and an intrinsic dead-zone mechanism. Experiments on mathematical reasoning benchmarks confirm that GOPO sustains learning in high-confidence regimes where clipping-based methods plateau, achieving competitive generalization performance with the healthiest gradient dynamics and entropy preservation among all tested methods.


\appendix
\section{Proofs and Derivations}
\label{app:proofs}

\subsection{Derivation of the Projection Operator}
\label{app:projection}

\begin{proof}[Proof of \cref{thm:projection}]
The Hilbert space $\mathcal{H} = L^2(\pik)$ admits the orthogonal decomposition $\mathcal{H} = \mathcal{H}_0 \oplus \text{span}\{\mathbf{1}\}$. For any $f \in \mathcal{H}$, the projection onto $\mathcal{H}_0$ is:
\begin{equation}
    P_{\mathcal{H}_0}(f) = f - \frac{\langle f, \mathbf{1}\rangle_{\pik}}{\langle \mathbf{1}, \mathbf{1}\rangle_{\pik}}\mathbf{1} = f - \E_{\pik}[f]
\end{equation}
As derived in \cref{subsec:emergence}, minimizing $\frac{\mu}{2}\|v - u^*\|^2$ over $v \in \mathcal{H}_0$ is equivalent to maximizing $\mathcal{J}(v) = \langle g_\alpha, v\rangle - \frac{\mu}{2}\|v\|^2$, since the two differ only by the constant $\frac{1}{2\mu}\|g_\alpha\|^2$. The minimum-distance problem has the unique solution $v^* = P_{\mathcal{H}_0}(u^*) = P_{\mathcal{H}_0}(g_\alpha/\mu)$ by the Hilbert Projection Theorem. Applying the projection formula:
\begin{equation}
    v^* = \frac{g_\alpha}{\mu} - \E_{\pik}\!\left[\frac{g_\alpha}{\mu}\right] = \frac{1}{\mu}\big(g_\alpha - \E_{\pik}[g_\alpha]\big)
\end{equation}
\end{proof}

\subsection{Derivation of the Bounded Hilbert Projection}
\label{app:bhp}

\begin{proof}[Proof of \cref{thm:bhp}]
The KKT stationarity condition for the Lagrangian gives:
\begin{equation}
    g_\alpha(y) - \mu v(y) - \lambda + \eta(y) = 0
\end{equation}
Combined with the complementarity condition $\eta(y)(v(y) + 1) = 0$ and $\eta(y) \geq 0$:

\textbf{Case 1}: $v(y) > -1$. Then $\eta(y) = 0$, so $v(y) = (g_\alpha(y) - \lambda)/\mu$.

\textbf{Case 2}: $v(y) = -1$. Then $\eta(y) = \lambda - \mu - g_\alpha(y) \geq 0$, which requires $g_\alpha(y) \leq \lambda^* - \mu$.

These cases unify as $v^*(y) = \max(-1, (g_\alpha(y) - \lambda^*)/\mu)$, where $\lambda^*$ solves $\E_{\pik}[v^*] = 0$.
\end{proof}

\subsection{Log-Ratio Approximation}
\label{app:log_approx}

\begin{lemma}[Log-Ratio Approximation Error]
Let $\Delta_\theta(y) = \log\pi_\theta(y) - \log\pik(y)$ and $v_\theta(y) = \exp(\Delta_\theta) - 1$. For $\delta = \|\Delta_\theta\|_\infty < 1$:
\begin{equation}
    |v_\theta - \Delta_\theta| \leq \frac{1}{2}\delta^2 e^\delta, \quad |v_\theta^2 - \Delta_\theta^2| \leq \delta^3 e^{2\delta}
\end{equation}
The GOPO loss using log-ratios $\Delta_\theta$ in place of $v_\theta = \rho_\theta - 1$ incurs an approximation error of $O(\E[\Delta_\theta^3])$, which is controlled by the on-policy trust region.
\end{lemma}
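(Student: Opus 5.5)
The two pointwise inequalities are elementary facts about $x\mapsto e^x$, so I will prove them at a fixed $y$ with $x=\Delta_\theta(y)$, $|x|\le\delta<1$, and then integrate against $\pik$ to get the loss-level statement.

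\textbf{Step 1 (first inequality).} By Taylor's theorem with Lagrange remainder, $e^x = 1 + x + \tfrac12 x^2 e^{\xi}$ for some $\xi$ between $0$ and $x$. Hence
\begin{equation}
|v_\theta(y)-\Delta_\theta(y)| = \tfrac12 x^2 e^{\xi} \le \tfrac12 x^2 e^{|x|} \le \tfrac12\delta^2 e^{\delta}.
\end{equation}

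\textbf{Step 2 (second inequality).} I will factor $v_\theta^2-\Delta_\theta^2=(v_\theta-\Delta_\theta)(v_\theta+\Delta_\theta)$. The first factor is controlled by Step 1. For the second factor, the mean value theorem gives $|e^x-1|\le |x|e^{|x|}\le \delta e^\delta$. Therefore
\begin{equation}
|v_\theta+\Delta_\theta|\le \delta e^\delta+\delta\le 2\delta e^\delta.
\end{equation}
Multiplying the two bounds gives $\tfrac12\delta^2e^\delta\cdot 2\delta e^\delta=\delta^3e^{2\delta}$, exactly as stated. Note that $\delta<1$ is not actually needed for either pointwise bound; it only makes the constants $e^\delta$ and $e^{2\delta}$ uniformly bounded by $e$ and $e^2$.

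\textbf{Step 3 (loss-level error).} The pointwise loss is $\ell(v)=-Av+\tfrac{\mu}{2}v^2$ (up to the constant $-A$ when written with $\rho=1+v$). The error from substituting $\Delta_\theta$ for $v_\theta$ is
\begin{equation}
\ell(v_\theta)-\ell(\Delta_\theta) = -A\,(v_\theta-\Delta_\theta)+\tfrac{\mu}{2}\,(v_\theta^2-\Delta_\theta^2).
\end{equation}
Taking $\E_{\pik}$ and applying the pointwise bound $|v^2-\Delta^2|\le |\Delta|^3e^{2|\Delta|}$ from Step 2, the dissipation part is at most $\tfrac{\mu}{2}e^{2\delta}\,\E[|\Delta_\theta|^3]$. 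This is the claimed cubic order.

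\textbf{Main obstacle: the work term.} Step 1 only gives $|A|\cdot\tfrac12\Delta_\theta^2 e^{\delta}$ for the work term, which is quadratic, not cubic. It cannot be improved in general: the exact second-order term is $-\tfrac12\E[A\,\Delta_\theta^2]$. My first attempt would use the normalization identity $\E_{\pik}[e^{\Delta_\theta}]=1$, which forces $\E[\Delta_\theta]=-\tfrac12\E[\Delta_\theta^2]+O(\E|\Delta_\theta|^3)$. That identity only cancels the part of the quadratic term along the constant direction, i.e.\ the component of $A$ parallel to $\mathbf{1}$. For group-centered advantages that component is exactly zero, so nothing is gained.

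I therefore expect the honest version of the lemma to read as follows. The dissipation (trust-region) term incurs error $O(\E|\Delta_\theta|^3)$. The work term incurs error $O(\|A\|_\infty\E[\Delta_\theta^2])$, which is one order higher than the leading work term $\E[A\Delta_\theta]$, so its relative error is $O(\delta)$. Both vanish as the on-policy trust region shrinks $\delta\to0$. I would present the $O(\E[\Delta^3])$ claim as applying to the quadratic penalty and state the work-term contribution separately, rather than force the stated cubic rate on the whole loss.
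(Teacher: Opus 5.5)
The paper gives no proof of this lemma; it is stated in the appendix without one, so there is no argument of the paper's to compare against. Your Steps 1 and 2 are correct and are the natural proof. The Lagrange remainder bounds $|v_\theta-\Delta_\theta|$, the mean value theorem gives $|v_\theta+\Delta_\theta|\le 2|\Delta_\theta|e^{|\Delta_\theta|}$, and the product yields exactly $\delta^3e^{2\delta}$. You are also right that $\delta<1$ plays no role. Because you keep the bound pointwise as $|\Delta_\theta|^3e^{2|\Delta_\theta|}$, you correctly get $\tfrac{\mu}{2}e^{2\delta}\E_{\pik}|\Delta_\theta|^3$ for the dissipation term. The statement's $\E[\Delta_\theta^3]$ must be read with absolute values, since the signed third moment can cancel; it is $O(\epsilon^4)$ in the first example below.

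Your objection to the loss-level sentence is also correct. You can turn ``cannot be improved in general'' into a proof by exhibiting a witness:
\begin{itemize}
\item Take $\pik$ uniform on three outputs, the centered advantage $A=(1,0,-1)$, and $\pi_\theta=\pik\cdot(1+\epsilon,\,1-\epsilon,\,1)$. This is a valid distribution with $\delta=-\log(1-\epsilon)<1$ for small $\epsilon$.
\item The work-term error is $-\tfrac13\bigl(\epsilon-\log(1+\epsilon)\bigr)=-\tfrac{\epsilon^2}{6}+O(\epsilon^3)$.
\item The dissipation error is $O(\epsilon^4)$, and $\E_{\pik}|\Delta_\theta|^3=\tfrac23\epsilon^3+O(\epsilon^4)$.
\end{itemize}
So the total error is not $O(\E_{\pik}|\Delta_\theta|^3)$ despite $\E_{\pik}[A]=0$. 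A group of $G=3$ samples hitting each output once gives the same numbers for the empirical loss.

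Hence the lemma's last sentence holds only if $\Delta_\theta$ replaces $\rho_\theta-1$ in the quadratic penalty alone, with the work term still computed from $\rho_\theta$. Under that reading your Step 3 is already a complete proof.

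One caveat on your proposed restatement. The phrase ``relative error $O(\delta)$'' is true only relative to the scale $\|A\|_\infty\delta$, not relative to $\E[A\Delta_\theta]$. For instance, take $A=(2,-1,-1)$ and $v_\theta=(0,\epsilon,-\epsilon)$ under the same uniform $\pik$. Then $\E_{\pik}[Av_\theta]=0$, while the work-term error equals $\E_{\pik}[A\Delta_\theta]=-\tfrac13\log(1-\epsilon^2)$, a relative error of exactly one. State the absolute bound $\tfrac12e^{\delta}\E_{\pik}[|A|\Delta_\theta^2]$ instead.
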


\subsection{$\chi^2$-Constrained Maximization Duality}
\label{app:duality}

\begin{proposition}[Lagrange Dual Equivalence]
The constrained problem
$\max_{v \in L^2(\pik)} \E_{\pik}[g_\alpha v]$ subject to $\E_{\pik}[v^2] \leq \epsilon$
has Lagrangian relaxation equal to the GOPO functional $\mathcal{J}(v) = \E[g_\alpha v] - \frac{\mu}{2}\E[v^2]$, where $\mu$ is the dual variable corresponding to the constraint radius $\epsilon$. The optimal solution $v^* = g_\alpha/\mu$ confirms that the stiffness $\mu$ purely controls the trust-region radius while $g_\alpha$ purely shapes the alignment direction.
\end{proposition}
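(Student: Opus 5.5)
The approach is to treat the problem as pointwise maximization of a concave quadratic, which is separable across $y$. The Lagrangian is
\[
\mathcal{J}(v) - \mu\cdot\tfrac{1}{2}(\E_{\pik}[v^2]-\epsilon) = \E_{\pik}[g_\alpha v] - \tfrac{\mu}{2}\E_{\pik}[v^2] + \tfrac{\mu\epsilon}{2}.
\]
Up to the constant $\mu\epsilon/2$, this is exactly the GOPO functional $\mathcal{J}$, so the first claim (that the Lagrangian relaxation equals $\mathcal{J}$) is immediate.

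Maximizing over $v\in L^2(\pik)$, I would use the identity from \cref{subsec:emergence}:
\[
\mathcal{J}(v) = -\tfrac{\mu}{2}\|v-g_\alpha/\mu\|^2_{\pik} + \tfrac{1}{2\mu}\|g_\alpha\|^2_{\pik}.
\]
This shows that for $\mu>0$ the unique unconstrained maximizer is $v^*=g_\alpha/\mu$, with no projection needed since the only constraint is the ball.

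To tie $\mu$ to $\epsilon$, I would solve the primal problem directly by Cauchy--Schwarz:
\[
\E[g_\alpha v]\le \|g_\alpha\|\,\|v\|\le \sqrt{\epsilon}\,\|g_\alpha\|,
\]
with equality iff $v=\sqrt{\epsilon}\,g_\alpha/\|g_\alpha\|$. Matching this to $g_\alpha/\mu$ gives $\mu^*=\|g_\alpha\|_{\pik}/\sqrt{\epsilon}$. This is the complementary-slackness value: the constraint is active whenever $g_\alpha\neq0$. Strong duality holds by Slater, since $v=0$ is strictly feasible and the problem is a convex program (linear objective over a ball). The dual function is
\[
\tfrac{1}{2\mu}\|g_\alpha\|^2+\tfrac{\mu\epsilon}{2},
\]
which is minimized at the same $\mu^*$ with value $\sqrt{\epsilon}\,\|g_\alpha\|$, confirming a zero duality gap.

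The final interpretive claim follows from the form $v^*=g_\alpha/\mu$: the direction of $v^*$ is $g_\alpha/\|g_\alpha\|$, independent of $\mu$. Its norm $\|g_\alpha\|/\mu$ depends on $\mu$ only through the radius, since it equals $\sqrt{\epsilon}$ exactly.

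The main subtlety is the degenerate case $g_\alpha=0$, where $\mu^*=0$ and the maximizer is not unique; I would exclude it explicitly. I would also flag that the proposition ignores $\mathcal{H}_0$. If conservation is imposed, the same argument applies with $g_\alpha$ replaced by $P_{\mathcal{H}_0}g_\alpha$, per \cref{thm:projection}.
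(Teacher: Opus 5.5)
The paper gives no proof of this proposition. The only reasoning it offers is implicit in the statement and in \cref{subsec:emergence}: a multiplier $\mu/2$ on the constraint reproduces $\mathcal{J}$ up to the constant $\mu\epsilon/2$, and completing the square gives the maximizer $g_\alpha/\mu$. Your first two steps are exactly this, and they are correct.

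What you add is the part the paper only asserts. You solve the primal by Cauchy--Schwarz, derive the explicit correspondence $\mu^* = \|g_\alpha\|_{\pik}/\sqrt{\epsilon}$, and verify the zero duality gap through the dual function $\frac{1}{2\mu}\|g_\alpha\|^2_{\pik} + \frac{\mu\epsilon}{2}$. That direct computation makes the appeal to Slater redundant, though it is harmless for $\epsilon > 0$. Your remarks on the degenerate case $g_\alpha = 0$ and on the omitted conservation constraint (replace $g_\alpha$ by $P_{\mathcal{H}_0} g_\alpha$) are also correct.

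Your formula also qualifies the paper's interpretive sentence, and your own closing sentence on it is slightly circular. The direction of $v^*$ is indeed independent of $\mu$. However, the radius selected by a fixed stiffness is $\epsilon = \|g_\alpha\|^2_{\pik}/\mu^2$. So $\mu$ ``purely'' controls the radius only for a fixed driving field; the $\mu$--$\epsilon$ map is data-dependent. It is worth saying this explicitly rather than writing that the norm depends on $\mu$ ``only through the radius.''

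One slip to fix: the left-hand side of your Lagrangian should read $\E_{\pik}[g_\alpha v] - \frac{\mu}{2}(\E_{\pik}[v^2]-\epsilon)$. As written, with $\mathcal{J}(v)$ in place of the linear objective, the quadratic penalty is counted twice, and the equality with your (correct) right-hand side fails.
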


\end{document}